\newcommand{\argmin}{\operatornamewithlimits{arg\,min}}
\newcommand{\akash}[1]{{\textcolor{purple}{[{\bf Akash:} #1]}}}
\newcommand{\adish}[1]{{\textcolor{red}{[{\bf Adish:} #1]}}}
\newcommand{\yuxin}[1]{{\textcolor{ForestGreen}{[{\bf Yuxin:} #1]}}}
\newcommand{\hank}[1]{{\textcolor{NavyBlue}{[{\bf Hank:} #1]}}}
\newcommand{\annotate}[1]{{\textcolor{blue}{[{\bf Note:} #1]}}}
\newcommand{\akash}[1]{}
\newcommand{\adish}[1]{}
\newcommand{\yuxin}[1]{}
\newcommand{\hank}[1]{}
\newcommand{\annotate}[1]{}
\newif\if@restonecol
\newtheorem{lemma}{Lemma}
\newtheorem{theorem}{Theorem}
\newtheorem*{theorem*}{Theorem}
\newtheorem{corollary}{Corollary}
\newtheorem{proposition}{Proposition}
\newtheorem{definition}{Definition}
\newtheorem{assumption}{Assumption}[subsection]
\newcommand{\Rmnum}[1]{\expandafter\@slowromancap\romannumeral #1@}
\newcommand{\Rd}{\mathbb{R}^d}
\newcommand{\Rq}{\mathbb{R}^q}
\newcommand{\defref}[1]{Definition~\ref{#1}}
\newcommand{\figref}[1]{Fig.~\ref{#1}}
\newcommand{\eqnref}[1]{\text{Eq.}~(\ref{#1})}
\newcommand{\secref}[1]{\S\ref{#1}}
\newcommand{\appref}[1]{Appendix \ref{#1}}
\newcommand{\stepref}[1]{Step \ref{#1}}
\newcommand{\thmref}[1]{Theorem~\ref{#1}}
\newcommand{\corref}[1]{Corollary~\ref{#1}}
\newcommand{\propref}[1]{Proposition~\ref{#1}}
\newcommand{\lemref}[1]{Lemma~\ref{#1}}
\newcommand{\assref}[1]{Assumption~\ref{#1}}
\newcommand{\assumref}[3]{Assumptions~$[$\ref{#1},\,\ref{#2},\,\ref{#3}$]$}
\newcommand{\paren} [1] {\ensuremath{ \left( {#1} \right) }}
\newcommand{\parenb} [1] {\ensuremath{ \big( {#1} \big) }}
\newcommand{\bigparen} [1] {\ensuremath{ \Big( {#1} \Big) }}
\newcommand{\bracket}[1]{\left[#1\right]}
\newcommand{\curlybracket}[1]{\ensuremath{\left\{#1\right\}}}
\newcommand{\norm}[2]{\ensuremath{\left\langle#1,\:#2\right\rangle_{\cH_{\rK}}}}
\newcommand{\normg}[2]{\ensuremath{\left\langle#1,\:#2\right\rangle}}
\newcommand{\condcurlybracket}[2]{\ensuremath{\left\{#1\::\:#2\right\}}.}
\newcommand{\inmod}[1]{\ensuremath{\left\lvert\left\lvert#1\right\rvert\right\rvert}}
\newcommand{\boldeta}{\ensuremath{\boldsymbol{\eta}}}
\newcommand{\expct}[1]{\mathbb{E}\left[#1\right]}
\newcommand{\expctover}[2]{\mathop{\mathbb{E}}_{#1}\!\left[#2\right]}
\newcommand{\expover}[2]{\mathop{\mathbb{E}}_{#1}\!\Big[#2\Big]}
\newcommand{\abs}[1]{\left\vert#1\right\vert}
\def \argmin {\mathop{\rm arg\,min}}
\newcommand{\bigO}[1]{\ensuremath{\mathcal{O}\paren{#1}}}
\newcommand{\bigTheta}[1]{\ensuremath{\Theta\paren{#1}}}
\newcommand{\bigOmega}[1]{\ensuremath{\Omega\paren{#1}}}
\newcommand{\nats}{\ensuremath{\mathbb{N}}}
\newcommand{\reals}{\ensuremath{\mathbb{R}}}
\newcommand{\cS}{{\mathcal{S}}}
\newcommand{\cA}{{\boldsymbol{\mathcal{A}}}}
\newcommand{\cD}{{\mathcal{D}}}
\newcommand{\cX}{{\mathcal{X}}}
\newcommand{\cH}{{\mathcal{H}}}
\newcommand{\cC}{{\mathcal{C}}}
\newcommand{\cP}{{\mathcal{P}}}
\newcommand{\cY}{{\mathcal{Y}}}
\newcommand{\ba}{{\mathbf{a}}}
\newcommand{\bb}{{\mathbf{b}}}
\newcommand{\be}{{\mathbf{e}}}
\newcommand{\bp}{{\mathbf{p}}}
\newcommand{\bv}{{\mathbf{v}}}
\newcommand{\bx}{{\mathbf{x}}}
\newcommand{\bz}{{\mathbf{z}}}
\newcommand{\rK}{{\mathcal{K}}}
\newcommand{\dd}{\boldsymbol{d}}
\newcommand{\thetab}{\boldsymbol{\theta}}
\renewcommand{\tt}[1]{\textit{#1}}
\newcommand{\boldlam}{\mathbf{\boldsymbol{\Lambda}}}
\def\mathbi#1{\textbf{\em #1}}
\newcommand{\p}{\mathbb{P}}
\def\BState{\State\hskip-\ALG@thistlm}
\newcommand{\id}[1]{\mathds{1}\left\{#1\right\}}
\DeclareMathOperator{\sign}{sign}
\renewcommand{\cite}[1]{\citep{#1}}
\title{The Teaching Dimension of Kernel Perceptrons}
\begin{document}

\twocolumn[

\aistatstitle{The Teaching Dimension of Kernel Perceptron}

\aistatsauthor{ Akash Kumar \And Hanqi Zhang \And  Adish Singla \And Yuxin Chen}

\aistatsaddress{ MPI-SWS
 \And University of Chicago
 \And MPI-SWS
 \And University of Chicago
 } ]

\begin{abstract}
Algorithmic machine teaching has been studied under the linear setting where exact teaching is possible. However, little is known for teaching nonlinear learners. Here, we establish the sample complexity of teaching, aka teaching dimension, for kernelized perceptrons for different families of feature maps. As a warm-up, we show that the teaching complexity is $\Theta(d)$ for the exact teaching of linear perceptrons in $\mathbb{R}^d$, and $\Theta(d^k)$ for kernel perceptron with a polynomial kernel of order $k$. Furthermore, under certain smooth assumptions on the data distribution, we establish a rigorous bound on the complexity for approximately teaching a Gaussian kernel perceptron. We provide numerical examples of the optimal (approximate) teaching set under several canonical settings for linear, polynomial and Gaussian kernel perceptrons.
\end{abstract}
\section{Introduction}
%
Machine teaching studies the problem of finding an optimal training sequence to steer a learner towards a target concept \cite{DBLP:journals/corr/ZhuSingla18}. 
An important learning-theoretic complexity measure of machine teaching is the \emph{teaching dimension} \cite{goldman1995complexity}, which specifies the minimal number of training examples required in the worst case to teach a target concept. Over the past few decades, the notion of teaching dimension has been investigated under a variety of learner's models and teaching protocols (e.g,. \citet{cakmak2012algorithmic,singla2013actively,singla2014near,liu2017iterative,haug2018teaching,tschiatschek2019learner,DBLP:conf/icml/LiuDLLRS18,DBLP:conf/ijcai/KamalarubanDCS19,DBLP:conf/nips/Hunziker0AR0PYS19,DBLP:conf/ijcai/DevidzeMH0S20,DBLP:conf/icml/RakhshaRD0S20}).
One of the most studied scenarios is the case of teaching a version-space learner \cite{goldman1995complexity,article:anthony95,zilles2008teaching,doliwa2014recursive,chen2018understanding,mansouri2019preference,pmlr-v98-kirkpatrick19a}. Upon receiving a sequence of training examples from the teacher, a version-space learner maintains a set of hypotheses that are consistent with the training examples, and outputs a \emph{random} hypothesis from this set. 

As a canonical example, consider teaching a 1-dimensional binary threshold function $f_{\theta^*}(x) = \id{x -\theta^*}$ for $x\in [0,1]$. For a learner with a finite (or countable infinite) version space, e.g., $\theta \in \{\frac{i}{n}\}_{i=0,\dots,n}$ where $n\in \mathbb{Z}^+$ (see \figref{fig:example.1d-vs}), a smallest training set is $\{\left(\frac{i}{n},0\right), \left(\frac{i+1}{n},1\right)\}$ where $\frac{i}{n} \leq \theta^* <  \frac{i+1}{n}$; thus the teaching dimension is $2$. However, when the version space is continuous, the teaching dimension becomes $\infty$, because it is no longer possible for the learner to pick out a unique threshold $\theta^*$ with a finite training set. This is due to two key (limiting) modeling assumptions of the version-space learner: (1) all (consistent) hypotheses in the version space are treated equally, and (2) there exists a hypothesis in the version space that is consistent with all training examples. As one can see, these assumptions fail to capture the behavior of many modern learning algorithms, where the best hypotheses are often selected via \emph{optimizing} certain loss functions, and the data is not perfectly separable (i.e. not realizable w.r.t. the hypothesis/model class).

To lift these modeling assumptions, a more realistic teaching scenario is to consider the learner as an \emph{empirical risk minimizer} (ERM). In fact, under the realizable setting, the version-space learner could be viewed as an ERM that optimizes the 0-1 loss---one that finds all hypotheses with zero training error. Recently, \citet{JMLR:v17:15-630} studied the teaching dimension of 
linear ERM, and established values of teaching dimension for several classes of linear (regularized) ERM learners, including support vector machine (SVM), logistic regression and ridge regression. As illustrated in \figref{fig:example.1d-hinge}, for the previous example it suffices to use $\{\left(\theta^*-\epsilon,0\right), \left(\theta^*+\epsilon,1\right)\}$  with any $\epsilon \leq \min(1-\theta^*, \theta^*)$  as training set to teach $\theta^*$ as an optimizer of the SVM objective (i.e., $l$2 regularized hinge loss); hence the teaching dimension is 2.  In \figref{fig:example.1d-perceptron}, we consider teaching an ERM learner with perceptron loss, i.e., 
$\ell(f_\theta(x), y) = \max\left( -y\cdot (x-\theta), 0\right)$ (where $y \in \curlybracket{-1,1}$). If the teacher is allowed to construct \emph{any} training example with \emph{any} labeling\footnote{If the teacher is restricted to only provide consistent labels (i.e., the realizable setting), then the ERM with perceptron loss reduces to the version space learner, where the teaching dimension is $\infty$.} , then it is easy to verify that the minimal training set is $\{(\theta^*, -1), (\theta^*,1)\}$.

\begin{figure}[t]
\centering
	\begin{subfigure}[b]{.2\textwidth}
	   \centering
		\includegraphics[trim={0, 0, 0, 10mm}, width=\linewidth]{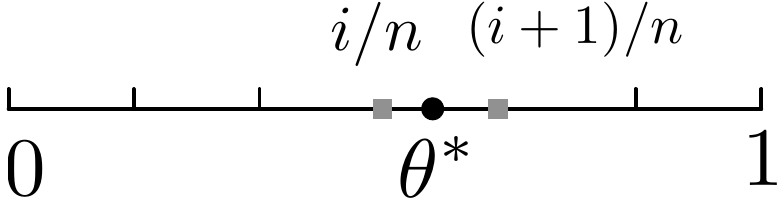}
		\vspace{-5mm}
		\caption{$\textsc{0/1}$ loss}
		\label{fig:example.1d-vs}
	\end{subfigure}\qquad
	\begin{subfigure}[b]{.2\textwidth}
	    \centering
		\includegraphics[width=\linewidth]{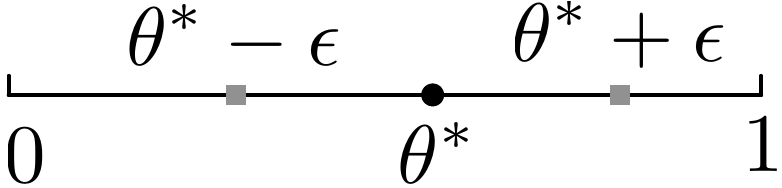}
		\vspace{-5mm}
		\caption{SVM (hinge loss)}
		\label{fig:example.1d-hinge}
	\end{subfigure}\\
	\vspace{2mm}
	\begin{subfigure}[b]{.2\textwidth}
	    \centering
		\includegraphics[width=\linewidth]{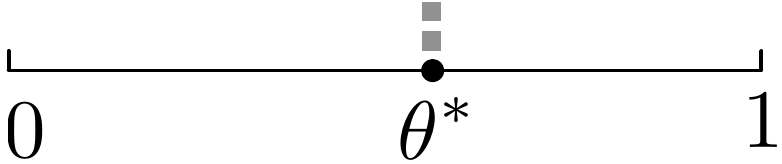}
		\vspace{-5mm}
		\caption{Perceptron}
		\label{fig:example.1d-perceptron}
	\end{subfigure}	
	\caption{Teaching a 1D threshold function to an ERM learner. Training instances are marked in grey. (a) Version-space learner with a finite hypothesis set. (b) SVM and training set $\{\left(\theta^*-\epsilon,0\right), \left(\theta^*+\epsilon,1\right)\}$.  (c) ERM learner with (perceptron) loss and training set $\{(\theta^*, 0), (\theta^*,1)\}$. }\label{fig:illu-relaxed-general}
	\vspace{-3mm}
\end{figure}
While these results show promise at understanding optimal teaching for ERM learners, existing work \cite{JMLR:v17:15-630} has focused exclusively on the linear setting with the goal to teach the exact hypothesis (e.g., teaching the exact model parameters or the exact decision boundary for classification tasks). Aligned with these results, we establish an upper bound as shown in \secref{subsec.linear}. It remains a fundamental challenge to rigorously characterize the teaching complexity for nonlinear learners. Furthermore, in the cases where exact teaching is not possible with a finite training set, the classical teaching dimension no longer captures the fine-grained complexity of the teaching tasks, and hence one needs to relax the teaching  goals and investigate new notions of teaching complexity.


In this paper, we aim to address the above challenges. We focus on kernel perceptron, a specific type of ERM learner that is less understood even under the linear setting.
Following the convention in teaching ERM learners, we consider the \emph{constructive} setting, where the teacher can construct arbitrary teaching examples in the support of the data distribution. Our contributions are highlighted below, with main theoretical results summarized in Table~\ref{tab:results-overview}.

\begin{itemize}
\item We formally define approximate teaching of kernel perceptron, and propose a novel measure of teaching complexity, namely the \emph{$\epsilon$-approximate teaching dimension} ($\epsilon$-TD), which captures the complexity of teaching a ``relaxed'' target that is close to the target hypothesis in terms of the expected risk.  Our relaxed notion of teaching dimension strictly generalizes the teaching dimension of \citet{JMLR:v17:15-630}, where it trades off the teaching complexity against the risk of the taught hypothesis, and hence is more practical in characterizing the complexity of a teaching task (\secref{sec.statement}).\newline

\item We show that exact teaching is feasible for kernel perceptrons with finite dimensional feature maps, such as linear kernel and polynomial kernel. Specifically, for data points in $\Rd$, we establish a $\bigTheta{d}$ bound on the teaching dimension of linear perceptron. 
Under a mild condition on data distribution, we provide a tight bound of $\bigTheta{\binom{d+k-1}{k}}$ for polynomial perceptron of order $k$. 
We also exhibit optimal training sets that match these teaching dimensions (\secref{subsec.linear} and \secref{subsec.poly}).\newline

\item We further show that for Gaussian kernelized perceptron, exact teaching is not possible with a finite set of hypotheses, and then establish a $d^{\bigO{\log^2 \frac{1}{\epsilon}}}$ bound on the $\epsilon$-approximate teaching dimension (\secref{subsec.gaussiankernel}). To the best of our knowledge, these results constitute the first known bounds on (approximately) teaching a non-linear ERM learner (\secref{sec.theoreticalresults}). 
\end{itemize}

\begin{table}[t!]
  \centering
  \scalebox{.88}{
\begin{tabular}{cccc}
\toprule
 & \textbf{linear} & \textbf{polynomial}           & \textbf{Gaussian}                            \\ 
 \midrule
TD (exact)     & $\bigTheta{d}$   & $\bigTheta{\binom{d+k-1}{k}}$ & $\infty$   \\
$\epsilon$-approximate TD & - & - & $d^{\bigO{\log^2 \frac{1}{\epsilon}}}$   \\
\textbf{Assumption }      & -                &   \ref{assumption: polyorthogonal}  &  \ref{assumption: orthogonal}, \ref{assumption: bounded cone}\\
\bottomrule
\end{tabular}
}
\caption{Teaching dimension for kernel perceptron}\label{tab:results-overview}
\vspace{-3mm}
\end{table}

\section{Problem Statement}\label{sec.statement}
\paragraph{Basic definitions}

We denote by $\cX$ the input space and $\cY:= \{-1,1\}$ the output space. A hypothesis is a function $h: \cX \to \cY$. In this paper, we identify a hypothesis $h_{\thetab}$ with its model parameter $\thetab$. The hypothesis space $\cH$ is a set of hypotheses. By training point we mean a pair $\paren{\bx,y} \in \cX \times \cY$. We assume that the training points are drawn from an unknown distribution $\cP$ over $\cX \times \cY$. A training set is a multiset $\cD$ = $\curlybracket{\paren{\bx_1,y_1},\cdots,\paren{\bx_n,y_n}}$ where repeated pairs are allowed. Let $\mathbb{D}$ denote the set of all training sets of all sizes. A learning algorithm $\cA: \mathbb{D} \to 2^{\cH} $ takes in a training set $D \in \mathbb{D}$ and outputs a subset of the hypothesis space $\cH$. That is, $\cA$ doesn't necessarily return a unique hypothesis. 
\vspace{-1mm}
\paragraph{Kernel perceptron}
Consider a set of training points $\cD := \curlybracket{\paren{\bx_i, y_i}}_{i=1}^n$ where $\bx_i \in \reals^d$  and hypothesis $\thetab \in \Rd$. A linear perceptron is defined as $f_{\thetab}(\bx):= \sign(\thetab\cdot \bx)$ 
in homogeneous setting. We consider the algorithm $\cA_{opt}$ to learn an optimal perceptron to classify $\cD$ as defined below:
\begin{equation}
 \cA_{opt}\paren{\cD} := \argmin_{\thetab \in \reals^d} \sum_{i = 1}^n \ell(f_{\thetab}(\bx_i),y_i).   \label{eqn: objectmain}
\end{equation}
where the loss function $\ell(f_{\thetab}(\bx),y) := \max(-y\cdot f_{\thetab}(\bx), 0)$. Similarly, we consider the non-linear setting via kernel-based hypotheses for perceptrons that are defined with respect to a kernel operator $\rK: \cX \times \cX \to \reals$ which adheres to Mercer’s positive definite conditions \cite{vapnik1998statistical}. A kernel-based hypothesis has the form, 
\begin{equation}
    f(\bx) = \sum_{i=1}^k{\alpha_i}\cdot\rK(\bx_i,\bx) \label{eqn: kernelfunction}
\end{equation}
where $\forall i\,\, \bx_i \in \cX$ and $\alpha_i$ are reals. In order to simplify the derivation of the algorithms and their analysis, we associate a \tt{reproducing kernel Hilbert space}  (RKHS)  with $\rK$ in  the  standard  way  common  to  all  kernel  methods.  Formally, let $\cH_{\rK}$ be  the  closure  of  the  set  of  all  hypotheses  of  the  form  given  in \eqnref{eqn: kernelfunction}. A non-linear kernel perceptron corresponding to $\rK$ optimizes \eqnref{eqn: objectmain} as follows:
\begin{equation}
    \cA_{opt}(\cD):= \argmin_{\thetab \in \cH_{\rK}}\sum_{i=1}^n \ell(f_{\thetab}(\bx_i),y_i)\label{eqn: objectkernel}
\end{equation}
where $f_{\thetab}(\cdot) = \sum_{i=1}^l \alpha_i\cdot \rK(\ba_i,\cdot)$ for some $\{\ba_i\}_{i=1}^l \subset \cX$ and $\alpha_i$ real. Alternatively, we also write $f_{\thetab}(\cdot) = \thetab\cdot\Phi(\cdot)$ where
$\Phi : \cX \rightarrow \cH_{\rK}$ is defined as feature map to the kernel function $\rK$. A reproducing kernel Hilbert space with $\rK$ could be decomposed as $\rK(\bx,\bx') = \normg{\Phi(\bx)}{\Phi(\bx')}$ \cite{learnkernel} for any $\bx, \bx' \in \cX$.
Thus, we also identify $f_{\thetab}$ as $\sum_{i=1}^l \alpha_i\cdot \Phi(\ba_i)$.
\paragraph{The teaching problem}
We are interested in the problem of teaching a target hypothesis $\thetab^*$ where a helpful \tt{teacher} provides labelled data points $\mathcal{TS} \subseteq \cX\times \cY$, also defined as a \tt{teaching set}. Assuming the constructive setting \cite{JMLR:v17:15-630}, to teach a kernel perceptron learner the teacher can construct a training set with any items 
in $\Rd$ i.e. for any $(\bx', y') \in \mathcal{TS}$ we have  $\bx' \in \Rd$ and $y' \in \curlybracket{-1,1}$.
Importantly, for the purpose of teaching we do not \tt{assume} that $\mathcal{TS}$ are drawn \tt{i.i.d} from a distribution. We define the teaching dimension for \tt{exact} parameter of $\thetab^*$ corresponding to a kernel perceptron as $TD(\thetab^*, \cA_{opt})$, which is the size of the smallest teaching set $\mathcal{TS}$ such that $\cA_{opt}\paren{\mathcal{TS}} = \{\thetab^*\}$. We define teaching of exact parameters of a target hypothesis $\thetab^*$ as \tt{exact teaching}. Since, a perceptron is agnostic to norms, we study the problem of teaching a target classifier \tt{decision boundary} where $\cA_{opt}\paren{\mathcal{TS}} = \{t\thetab^*\}$ for some real $t > 0$. Thus, $$TD(\{t\thetab^*\}, \cA_{opt}) = \min_{\text{real}\,p > 0} TD(p\thetab^*, \cA_{opt}).$$ 
Since it can be stringent to construct a teaching set for decision boundary (see \secref{subsec.gaussiankernel}), exact teaching is not always feasible. We introduce and study \tt{approximate teaching} which is formally defined as:
\begin{definition}[$\epsilon$-approximate teaching set]
Consider a kernel perceptron learner, with a kernel $\rK: \cX \times \cX \to \reals$ and the corresponding RKHS feature map $\Phi(\cdot)$. For a target model $\thetab^* \in \cH_{\rK}$ and $\epsilon > 0$, we say $\mathcal{TS} \subseteq \cX\times \cY$ is an $\epsilon$-approximate teaching set wrt to $\cP$ if the kernel perceptron $\hat{\thetab} \in \cA_{opt}(\mathcal{TS})$ 
satisfies 
\looseness -1
\begin{equation}
    \left|\expct{\max(-y\cdot f^*(\bx), 0)} - \expct{\max(-y\cdot \hat{f}(\bx), 0)}\right| \le \epsilon
\end{equation}
where the expectations are over $(\bx,y)\sim \cP$ 
and $f^*(\bx) = \thetab^*\cdot \Phi(\bx)$ and $\hat{f}(\bx) = \hat{\thetab}\cdot \Phi(\bx)$.
\end{definition}
Naturally, we define approximate teaching dimension as:
\begin{definition}[$\epsilon$-approximate teaching dimension]\label{def:approxteaching}
Consider a kernel perceptron learner, with a kernel $\rK: \cX \times \cX \to \reals$ and the corresponding RKHS feature map $\Phi(\cdot)$. For a target model $\thetab^* \in \cH_{\rK}$ and $\epsilon > 0$, we define $\epsilon$-$TD(\thetab^*,\cA_{opt})$ as the teaching dimension which is the size of the smallest teaching set for $\epsilon$-approximate teaching of $\thetab^*$ wrt $\cP$.
\end{definition}
According to \defref{def:approxteaching}, exact teaching corresponds to constructing a $0$-approximate teaching set for a target classifier (e.g., the decision boundary of a kernel perceptron).
We study linear and polynomial kernelized perceptrons in the exact teaching setting. Under some mild assumptions on the smoothness of the data distribution, we establish approximate teaching bound on approximate teaching dimension for Gaussian kernelized perceptron.

\vspace{-2mm}
\section{Teaching Dimension for Kernel Perceptron}\label{sec.theoreticalresults}
\vspace{-1mm}
In this section, we study the generic problem of teaching kernel perceptrons in three different settings:\,1) linear (in \secref{subsec.linear}); 2)\, polynomial (in \secref{subsec.poly}); and Gaussian (in \secref{subsec.gaussiankernel}). 
Before establishing our main result for Gaussian kernelized perceptrons, we first introduce two important results for linear and polynomial perceptrons inherently connected to the Gaussian perceptron. 
Our proofs are inspired by ideas from linear algebra and projective geometry as detailed \iftoggle{longversion}{in \appref{appendix:table-of-contents}}{in the supplemental materials}.
\vspace{-1mm}
\subsection{Homogeneous Linear Perceptron}\label{subsec.linear}
In this subsection, we study the problem of teaching a linear perceptron. 
First, we consider an optimization problem similar to \eqnref{eqn: objectmain} as shown in \citet{JMLR:v17:15-630}:\vspace{-1mm}
\begin{equation}
\cA_{opt} := \argmin_{\thetab \in \Rd} \sum_{i = 1}^n \ell(\thetab\cdot{\bx}_i, y_i) + \frac{\lambda}{2}||\thetab||^2_{A} \label{eqn: eqn1}
\looseness -3
\end{equation}
where $\ell(\cdot,\cdot)$ is a convex loss function, $A$ is a positive semi-definite matrix, $||\thetab||_{A}$ is defined as $\sqrt{\thetab^\top A \thetab}$, 
and $\lambda > 0$. For convex loss function $\ell(\cdot,\cdot)$, Theorem 1~\cite{JMLR:v17:15-630} established a degree-of-freedom lower bound on the number of training items to obtain a unique solution $\thetab^*$.
Since, the loss function for linear perceptron is convex thus we immediately obtain a lower bound on the teaching dimension as follows:
\begin{corollary}\label{cor: linear lower bound}
If $A = 0$ and $\lambda = 1$, then \eqnref{eqn: objectmain} can be solved as \eqnref{eqn: eqn1}. Moreover, teaching dimension for decision boundary corresponding to a target model $\thetab^*$ is lower-bounded by $\bigOmega{d}$.
\end{corollary}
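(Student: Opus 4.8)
The statement splits into a one-line reduction and a genuine lower bound, and my plan is to handle the reduction immediately and then prove the $\bigOmega{d}$ bound by a dimension count that makes the degree-of-freedom principle behind Theorem~1 of \citet{JMLR:v17:15-630} concrete. \textbf{Reduction.} Setting $A = 0$ annihilates the regularizer in \eqnref{eqn: eqn1}, since $\|\thetab\|_A^2 = \thetab^\top A\,\thetab = 0$ for every $\thetab \in \Rd$; with $\lambda = 1$ the objective is then exactly $\sum_{i=1}^n \ell(\thetab\cdot\bx_i, y_i)$, the convex perceptron objective of \eqnref{eqn: objectmain}. Thus \eqnref{eqn: objectmain} is an instance of the regularized convex-loss family of \eqnref{eqn: eqn1}, so the hypotheses of the cited theorem are met.

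\textbf{Lower bound.} Rather than quote the theorem as a black box, I would argue from the structure of the objective. With $A = 0$ the loss $\sum_i \ell(\thetab\cdot\bx_i, y_i)$ depends on $\thetab$ only through the $n = |\mathcal{TS}|$ scalars $\{\thetab\cdot\bx_i\}$. Suppose for contradiction that some $\mathcal{TS}$ teaches the decision boundary of $\thetab^*$ with $n \le d-2$, and let $\hat\thetab$ be the induced unique optimizer, so $\cA_{opt}(\mathcal{TS}) = \{\hat\thetab\}$ with $\hat\thetab$ parallel to $\thetab^*$. Then $\mathrm{span}\{\bx_i\}$ has dimension at most $d-2$, its orthogonal complement $W$ has dimension at least $2$, and we may pick $\bv \in W\setminus\{0\}$ with $\bv \not\parallel \hat\thetab$. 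Because $(\hat\thetab + s\bv)\cdot\bx_i = \hat\thetab\cdot\bx_i$ for all $i$ and all $s$, the perturbed vector $\hat\thetab + s\bv$ attains the same optimal objective value and hence also lies in $\cA_{opt}(\mathcal{TS})$; for small $s \ne 0$ it is nonzero and, since $\bv \not\parallel \hat\thetab$, defines a strictly different decision boundary, contradicting $\cA_{opt}(\mathcal{TS}) = \{\hat\thetab\}$. Therefore $n \ge d-1$, i.e.\ the teaching set has size $\bigOmega{d}$.

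\textbf{Decision boundary and the main obstacle.} Because the argument targets the boundary directly, the passage through $TD(\{t\thetab^*\}, \cA_{opt}) = \min_{p>0} TD(p\thetab^*, \cA_{opt})$ is automatic: each scaling requires at least $d-1$ teaching points, so the minimum is still $\bigOmega{d}$. The point I expect to need the most care is the choice of perturbation: a lone orthogonal direction can be parallel to $\hat\thetab$ and thus leave the boundary fixed, which is exactly why I force $\dim W \ge 2$ and obtain $n \ge d-1$ rather than $n \ge d$. I would also note that the perceptron loss is only weakly convex and admits the trivial minimizer $\thetab = 0$, so ``uniqueness'' must throughout be read as uniqueness of the taught decision boundary; the invariance of the loss under this $(\ge 2)$-dimensional family of perturbations is precisely the degree-of-freedom obstruction formalized in \citet{JMLR:v17:15-630}.
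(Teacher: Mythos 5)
Your proof is correct, and it takes a genuinely different route from the paper's. The paper's own argument for this corollary is a one-line citation: it observes that with $A=0$, $\lambda=1$ the perceptron objective is an instance of the regularized convex-loss family of \eqnref{eqn: eqn1}, and then invokes Theorem~1 of \citet{JMLR:v17:15-630} as a black box to get the degree-of-freedom lower bound. You instead prove that lower bound from scratch: since the objective depends on $\thetab$ only through the inner products $\{\thetab\cdot\bx_i\}$, any teaching set with $n\le d-2$ leaves an orthogonal complement $W$ of dimension at least $2$, and perturbing the optimizer along a direction of $W$ not parallel to $\hat\thetab$ produces another minimizer with a strictly different boundary. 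This buys self-containedness and makes explicit exactly what ``degree of freedom'' means here; the price is the slightly weaker explicit constant ($n\ge d-1$ rather than whatever Theorem~1 of the cited work yields), which is immaterial for an $\bigOmega{d}$ statement. Your care in requiring $\dim W\ge 2$ (so the perturbation direction can be chosen non-parallel to $\hat\thetab$) is exactly the right point to worry about, and your closing caveat---that $\thetab=0$ and all positive rescalings are always minimizers of the perceptron loss, so ``uniqueness'' must be read at the level of the decision boundary---is a looseness present in the paper itself (compare the conclusion $\hat\thetab = t\thetab^*$ for \emph{some} $t>0$ in the proof of \thmref{thm: linear perceptron main result}), not a defect of your argument.
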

Now, we would establish an upper bound on $TD(\cA_{opt},\thetab^*)$ for exact teaching of the decision boundary of a 
target model $\thetab^*$. The key idea is to find a set of points which span the orthogonal subspace of $\thetab^*$, which we use to force a solution $\hat{\thetab} \in \cA_{opt}$ such that it has a component only along $\thetab^*$. Formally, we state the claim of the result with proof as follows:
\begin{theorem}\label{thm: linear perceptron main result}
Given any target model $\thetab^*$, for solving \eqnref{eqn: objectmain} the teaching dimension for the decision boundary corresponding to $\thetab^*$ is $\bigTheta{d}$. The following is a teaching set:
\begin{align*}
    {\bx}_i = \bv_i,\quad y_i = 1\quad \forall\; i\; \in\; [d-1];\qquad\qquad\qquad\ \ \,\\
    \quad {\bx}_d = -\sum_{i=1}^{d-1} \bv_i,\quad y_d = 1;\quad {\bx}_{d+1} = \thetab^*,\quad y_{d+1} = 1
\end{align*}
where $\{\bv_i\}_{i=1}^{d}$ is an orthogonal basis for $\Rd$ which extends with $\bv_d = \thetab^*$.
\vspace{-1mm}
\end{theorem}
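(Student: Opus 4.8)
The plan is to pair the lower bound already in hand with a direct verification that the displayed set of $d+1$ points forces the set of minimizers to collapse onto the ray through $\thetab^*$. The bound $\bigOmega{d}$ is immediate from \corref{cor: linear lower bound}, so the entire content is the matching upper bound: I will show that the proposed teaching set $\mathcal{TS}$, which has size $d+1 = \bigO{d}$, satisfies $\cA_{opt}(\mathcal{TS}) = \{t\thetab^* : t \ge 0\}$, i.e. it pins down exactly the decision boundary of $\thetab^*$. Combining the two bounds then gives $\bigTheta{d}$.

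First I would record the structural fact that drives everything. The perceptron loss $\ell(\thetab\cdot\bx,y) = \max(-y(\thetab\cdot\bx),0)$ is nonnegative and positively homogeneous of degree one in $\thetab$, so the objective $L(\thetab) = \sum_i \ell(\thetab\cdot\bx_i,y_i)$ has minimum value $0$ (attained trivially at $\thetab=0$), and its minimizer set is exactly the feasible cone $\{\thetab : y_i(\thetab\cdot\bx_i) \ge 0 \ \forall i\}$, since a sum of nonnegative terms vanishes iff each term does. Because every label in $\mathcal{TS}$ equals $+1$, this cone is cut out by the linear inequalities $\thetab\cdot\bx_i \ge 0$.

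Next I would evaluate those inequalities against the orthogonal basis $\{\bv_i\}_{i=1}^d$ with $\bv_d = \thetab^*$. The points $\bx_i = \bv_i$ for $i \in [d-1]$ give $\thetab\cdot\bv_i \ge 0$, while the single balancing point $\bx_d = -\sum_{i=1}^{d-1}\bv_i$ gives $\sum_{i=1}^{d-1}\thetab\cdot\bv_i \le 0$. A sum of nonnegative terms being at most zero forces each to vanish, so these $d$ constraints together yield $\thetab\cdot\bv_i = 0$ for all $i \in [d-1]$; equivalently $\thetab$ lies in the orthogonal complement of $\mathrm{span}(\bv_1,\dots,\bv_{d-1})$, which by orthogonality of the basis is exactly $\mathrm{span}(\thetab^*)$. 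The last point $\bx_{d+1} = \thetab^*$ then contributes $\thetab\cdot\thetab^* \ge 0$, selecting the nonnegative half of that line. Hence the minimizer set is $\{t\thetab^* : t \ge 0\}$; and since $\thetab^*$ itself makes all inner products nonnegative, it attains loss $0$, confirming the cone is nonempty and realizes the decision boundary of $\thetab^*$.

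I expect the only delicate point to be the bookkeeping around homogeneity and the zero vector: because $L$ is degree-one homogeneous, the optimizer is a cone rather than a single parameter, so I must phrase the conclusion at the level of the decision boundary (the ray $\{t\thetab^* : t \ge 0\}$), matching the decision-boundary teaching objective $\cA_{opt}(\mathcal{TS}) = \{t\thetab^*\}$ rather than insisting on a unique $\thetab$. The algebraic heart of the construction, namely using the individual basis points together with their negated sum to convert $d-1$ one-sided constraints into $d-1$ equalities, is the step worth stating carefully; the remaining verifications are routine.
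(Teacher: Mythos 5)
Your proposal is correct and follows essentially the same route as the paper's proof: invoke \corref{cor: linear lower bound} for the $\bigOmega{d}$ lower bound, observe that the objective vanishes at $t\thetab^*$ so any minimizer must make every loss term zero, use the point $\bx_d = -\sum_{i=1}^{d-1}\bv_i$ to upgrade the $d-1$ inequalities $\thetab\cdot\bv_i \ge 0$ to equalities, and use $\bx_{d+1}=\thetab^*$ to select the positive ray. Your explicit handling of the degree-one homogeneity and the zero vector is slightly more careful than the paper's, but the argument is the same.
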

\begin{proof}
Using \corref{cor: linear lower bound}, the lower bound for solving \eqnref{eqn: objectmain} is immediate. Thus, if we show that the mentioned labeled set of training points form a teaching set, then we can show an upper bound which would imply a tight bound of $\bigTheta{d}$ on the teaching dimension for finding the decision boundary. Denote the set of labeled data points as $\cD$. Denote by $\bp(\thetab) := \sum_{i = 1}^{d+1} \max(-y_i\cdot\thetab\cdot{\bx}_i,\: 0)$. Since $\{\bv_i\}_{i=1}^{d}$ is an orthogonal basis, thus $\forall \, i \in [d-1]\quad \bv_i\cdot \thetab^* = 0$, thus it is not very difficult to show that $\bp(t\thetab^*) = 0$ for some positive scalar $t$.
Note, if $\hat{\thetab}$ is a solution to \eqnref{eqn: objectmain} then:
\vspace{-1mm}
\begin{equation*}
    \hat{\thetab} \in \argmin_{\thetab \in \Rd} \sum_{i = 1}^{d+1} \max(-y_i\cdot\thetab\cdot{\bx}_i,\: 0)
\end{equation*}
Also, $\bp(\hat{\thetab}) = 0 \implies {\bx}_i\cdot \hat{\thetab} \ge 0\; \forall \, i \in [d]$ but then $\bx_{d} = -  \sum_{i=1}^{d-1} {\bx}_i$ $\implies \forall \, i \in [d]\quad {\bx}_i\cdot \hat{\thetab} = 0$. Note that, $\hat{\thetab}\cdot \thetab^* \ge 0$ forces $\hat{\thetab} = t\thetab^*$ for some positive constant $t$. Thus, $\cD$ is a teaching set for the decision boundary of $\thetab^*$. This establishes the upper bound, and hence the theorem follows.
\end{proof}
\vspace{-3mm}
\paragraph{Numerical example} To illustrate \thmref{thm: linear perceptron main result}, we provide a numerical example for teaching a linear perceptron in $\mathbb{R}^3$, with $\thetab^* = (-3,3,5)^\top$ (illustrated in \figref{fig:exp:exact-teaching:linear}). To construct the teaching set, we first obtain an orthogonal basis $\{(0.46, 0.86, -0.24)^\top, (0.76, -0.24, 0.6)^\top\}$ for the subspace orthogonal to $\thetab^*$, and add a vector $(-1.22, -0.62, -0.36)^\top$ which is in the exact opposite direction of the first two combined. Finally we add to $\mathcal{TS}$ an arbitrary vector which has a positive dot product with the normal vector, e.g. $(-0.46, 0.46, 0.76)^\top$. Labeling all examples positive, we obtain $\mathcal{TS}$ of size $4$. 

\subsection{Homogeneous Polynomial Kernelized Perceptron}\label{subsec.poly}
In this subsection, we study the problem of teaching a polynomial kernelized perceptron in realizable setting. Similar to \secref{subsec.linear}, we establish an exact teaching bound on the teaching dimension under a mild condition on the data distribution.
We consider homogeneous polynomial kernel $\rK$ of degree $k$ in which for any $\bx, \bx' \in \Rd$ \[\rK(\bx, \bx') = \paren{\langle\bx, \bx'\rangle}^k\]
If $\Phi(\cdot)$ denotes the \textit{feature map} for the corresponding RKHS, then we know that the dimension of the map is $\binom{d+k-1}{k}$ where each component of the map can be represented by $\Phi_{\boldsymbol{\lambda}}(\bx) = \sqrt{ \frac{k!}{\prod_{i=1}^d\boldsymbol{\lambda}_i!}}\bx^{\boldsymbol{\lambda}}$ 
where $\boldsymbol{\lambda} \in \paren{\nats\cup \curlybracket{0}}^{d}$ and $\sum_{i} \boldsymbol{\lambda}_i = k$. 
Denote by $\cH_{\rK}$ the RKHS corresponding to the polynomial kernel $\rK$. We use $\cH_k := \cH_k(\Rd)$ to represent the linear space of homogeneous polynomials of degree $k$ over $\Rd$. We mention an important result which shows the RKHS for polynomial kernels is isomorphic to the space of homogeneous polynomials of degree $k$ in $d$ variables.
\begin{proposition}[Chapter III.2, Proposition 6 \cite{article}]\label{prop: polynomial space}
$\cH_k = \cH_{\rK}$ as function spaces and inner product spaces.
\end{proposition}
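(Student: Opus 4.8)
The plan is to exhibit an explicit isometric isomorphism between the two spaces by matching the canonical feature map against the monomial basis of $\cH_k$. First I would expand the kernel via the multinomial theorem: for $\bx, \bx' \in \Rd$,
\[
\rK(\bx,\bx') = \paren{\sum_{i=1}^d x_i x'_i}^k = \sum_{\abs{\boldsymbol{\lambda}} = k} \frac{k!}{\prod_{i=1}^d \boldsymbol{\lambda}_i!}\, \bx^{\boldsymbol{\lambda}} (\bx')^{\boldsymbol{\lambda}} = \sum_{\abs{\boldsymbol{\lambda}}=k} \Phi_{\boldsymbol{\lambda}}(\bx)\,\Phi_{\boldsymbol{\lambda}}(\bx'),
\]
where the sum runs over all multi-indices $\boldsymbol{\lambda} \in \paren{\nats\cup\curlybracket{0}}^d$ with $\sum_i \boldsymbol{\lambda}_i = k$ and $\Phi_{\boldsymbol{\lambda}}$ is exactly the feature component displayed before the statement. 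This realizes $\rK$ as a finite inner product $\rK(\bx,\bx') = \normg{\Phi(\bx)}{\Phi(\bx')}$ in the coordinate space indexed by these multi-indices, of dimension $\binom{d+k-1}{k}$.

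Next I would identify the underlying function spaces. Since every generator $\rK(\bx,\cdot) = \sum_{\boldsymbol{\lambda}} \Phi_{\boldsymbol{\lambda}}(\bx)\,\Phi_{\boldsymbol{\lambda}}(\cdot)$ is a homogeneous polynomial of degree $k$ in the free variable, the span of $\curlybracket{\rK(\bx,\cdot)}$ over $\bx\in\Rd$ is contained in $\cH_k$; conversely, the monomials $\curlybracket{\bx^{\boldsymbol{\lambda}} : \abs{\boldsymbol{\lambda}} = k}$ are linearly independent and span $\cH_k$, and each rescaled monomial $\Phi_{\boldsymbol{\lambda}}$ is recovered as a linear combination of the generators (equivalently, evaluating at enough points makes the associated Gram structure non-degenerate), so the reverse inclusion holds as well. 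Because the feature space is finite-dimensional, no further closure is needed, and $\cH_{\rK} = \cH_k$ as sets of functions.

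Finally I would check that the inner products coincide. The RKHS inner product is pinned down by the reproducing identity $\normg{\rK(\bx,\cdot)}{\rK(\bx',\cdot)} = \rK(\bx,\bx')$, extended bilinearly. The multinomial expansion above shows precisely that $\curlybracket{\Phi_{\boldsymbol{\lambda}}}$ is an orthonormal system under the RKHS inner product; hence for $f = \sum_{\boldsymbol{\lambda}} c_{\boldsymbol{\lambda}} \Phi_{\boldsymbol{\lambda}}$ and $g = \sum_{\boldsymbol{\lambda}} d_{\boldsymbol{\lambda}} \Phi_{\boldsymbol{\lambda}}$ we get $\normg{f}{g} = \sum_{\boldsymbol{\lambda}} c_{\boldsymbol{\lambda}} d_{\boldsymbol{\lambda}}$, which is the same Euclidean pairing that the feature-coordinate identification assigns to $\cH_k$. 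Sending each $\Phi_{\boldsymbol{\lambda}}$ to the corresponding standard basis vector then makes the identification an isometry of inner product spaces.

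The main obstacle I anticipate is the middle step: rigorously arguing that the abstract RKHS (the closure of the span of kernel sections, equipped with the Moore--Aronszajn inner product) coincides on the nose with the concrete span of the $\Phi_{\boldsymbol{\lambda}}$ carrying the $\ell^2$ coefficient inner product. This reduces to verifying that the candidate space of homogeneous degree-$k$ polynomials, endowed with the inner product making $\curlybracket{\Phi_{\boldsymbol{\lambda}}}$ orthonormal, satisfies the reproducing property for $\rK$, and then invoking uniqueness of the RKHS; the linear independence of the monomials guarantees the coefficient representation is well defined, so the inner product is unambiguous.
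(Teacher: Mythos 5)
The paper does not prove this proposition at all: it is imported verbatim as a cited result (Chapter III.2, Proposition 6 of the reference), so there is no in-paper argument to compare yours against. Your blind proof is essentially the standard Moore--Aronszajn argument and is correct in outline: expand $\rK(\bx,\bx')=\langle\bx,\bx'\rangle^k$ by the multinomial theorem to exhibit the finite feature map $\Phi$, endow $\cH_k$ with the inner product making the rescaled monomials $\Phi_{\boldsymbol{\lambda}}$ orthonormal, verify the reproducing property $\normg{f}{\rK(\bx,\cdot)}=f(\bx)$, and invoke uniqueness of the RKHS; finite-dimensionality disposes of the closure. Two small points are worth tightening. First, in your third paragraph the logic runs backwards: you cannot read off orthonormality of the $\Phi_{\boldsymbol{\lambda}}$ directly from the reproducing identity on kernel sections; the correct order (which you do state in your final paragraph) is to \emph{define} the inner product by declaring them orthonormal and then check reproduction. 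Second, the claim that the kernel sections span all of $\cH_k$ deserves a one-line argument rather than the parenthetical about Gram non-degeneracy: if $f=\sum_{\boldsymbol{\lambda}}c_{\boldsymbol{\lambda}}\Phi_{\boldsymbol{\lambda}}$ is orthogonal to every $\rK(\bx,\cdot)$ in the candidate inner product, then $f(\bx)=0$ for all $\bx\in\Rd$, and linear independence of the monomials forces $f=0$; equivalently one can cite polarization, i.e.\ that $k$-th powers of linear forms span the homogeneous degree-$k$ polynomials. With those repairs the proof is complete and self-contained, which is arguably more than the paper itself offers for this statement.
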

The dimension  $\dim \paren{\cH_k(\Rd)}$ of the linear space of homogeneous polynomials of degree $k$ over $\Rd$ is $\binom{d+k-1}{k}$. Denote by $r := \binom{d+k-1}{k}$. Since $\cH_{\rK}$ is a vector space for polynomial kernel $\rK$, thus for exact teaching there is an obvious lower bound of $\bigOmega{\binom{d+k-1}{k}}$ on the teaching dimension. 

Before we establish the main result of this subsection we state a mild assumption on the target model we consider for exact teaching which is as follows:
\begin{assumption}[Existence of orthogonal polynomials]\label{assumption: polyorthogonal} For the target model $\thetab^* \in \cH_{\rK}$, we assume that there exist $(r-1)$ linearly independent polynomials on the orthogonal subspace of $\thetab^*$ in $\cH_{\rK}$ of the form $\left\{\Phi(\bz_i)\right\}_{i=1}^{r-1}$ where $\forall i\; \bz_i \in \cX $. 
\end{assumption} 
Similar to \thmref{thm: linear perceptron main result}, the key insight in having \assref{assumption: polyorthogonal} is to find independent polynomial on the orthogonal subspace defined by $\thetab^*$. We state the claim here with proof established  \iftoggle{longversion}{in \appref{appendix: polynomial perceptron}}{in the supplemental materials}.
\begin{theorem}\label{thm: poly_main_theorem}
For all target models $\thetab^* \in \cH_{\rK}$ for which the \assref{assumption: polyorthogonal} holds, for solving \eqnref{eqn: objectkernel}, the exact teaching dimension for the decision boundary corresponding to $\thetab^*$ is $\bigO{\binom{d+k-1}{k}}$. 
\end{theorem}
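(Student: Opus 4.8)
The plan is to follow the blueprint of \thmref{thm: linear perceptron main result}, but to work with the image $\Phi(\cX)$ inside the RKHS $\cH_{\rK}$ in place of the raw data space $\Rd$. Since $\dim(\cH_{\rK}) = r = \binom{d+k-1}{k}$ and $\cH_{\rK}$ is a vector space, the matching lower bound $\bigOmega{\binom{d+k-1}{k}}$ is already in hand (stated just before \assref{assumption: polyorthogonal}); so the entire task is to exhibit a teaching set of size $\bigO{\binom{d+k-1}{k}}$ whose minimizer is forced to be $\thetab^*$ up to positive rescaling. By \assref{assumption: polyorthogonal} we are handed inputs $\bz_1,\dots,\bz_{r-1}\in\cX$ whose feature images $\{\Phi(\bz_i)\}_{i=1}^{r-1}$ are linearly independent and span the orthogonal complement of $\thetab^*$; by \propref{prop: polynomial space} all inner products below are the genuine $\cH_{\rK}$ inner products. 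These $r-1$ points will play the role that $\bv_1,\dots,\bv_{d-1}$ played in the linear argument.

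Here I anticipate the one place where the linear construction does \emph{not} port over. In \thmref{thm: linear perceptron main result} orthogonality was forced by the single ``negative sum'' point $\bx_d = -\sum_{i<d}\bv_i$, which is legitimate only because the feature map is the identity, so any linear combination of data points is again a data point. For a degree-$k$ polynomial kernel the image $\Phi(\cX)$ is a nonlinear (Veronese) variety, and $-\sum_i \Phi(\bz_i)$ need not equal $\Phi(\bz)$ for any $\bz\in\cX$. I would therefore replace the negative-sum trick by a symmetric two-label construction (in the spirit of the $\{(\theta^*,-1),(\theta^*,1)\}$ set from the $1$D perceptron example): for each $i\in[r-1]$ include both $(\bz_i,+1)$ and $(\bz_i,-1)$. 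Their perceptron losses are $\max(-\hat\thetab\cdot\Phi(\bz_i),0)$ and $\max(\hat\thetab\cdot\Phi(\bz_i),0)$, so the pair contributes zero loss precisely when $\hat\thetab\cdot\Phi(\bz_i)=0$. To pin down the correct side, I pick any $\bz_r\in\cX$ with $f^*(\bz_r)=\thetab^*\cdot\Phi(\bz_r)\neq 0$ — such a point exists since $\thetab^*\neq 0$ is a nonzero homogeneous polynomial — and add $(\bz_r, y_r)$ with $y_r=\sign(f^*(\bz_r))$. The teaching set has size $2(r-1)+1 = 2r-1 = \bigO{\binom{d+k-1}{k}}$.

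It remains to verify correctness. Any positive multiple $t\thetab^*$ attains zero loss: the paired points vanish because $\Phi(\bz_i)\perp\thetab^*$, and $(\bz_r,y_r)$ gives $\max(-t\,|f^*(\bz_r)|,0)=0$; hence the minimum of \eqnref{eqn: objectkernel} is $0$. Conversely, any minimizer $\hat\thetab$ has loss $0$, so the double-labeled pairs force $\hat\thetab\cdot\Phi(\bz_i)=0$ for all $i\in[r-1]$; since these span the $(r-1)$-dimensional orthogonal complement of $\thetab^*$, we get $\hat\thetab=c\,\thetab^*$. The constraint from $(\bz_r,y_r)$ then reads $y_r\,c\,f^*(\bz_r)=c\,|f^*(\bz_r)|\ge 0$, forcing $c\ge 0$, i.e. $\hat\thetab=t\thetab^*$ with $t>0$ (the degenerate $c=0$ is discarded exactly as in the linear case). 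This teaches the decision boundary of $\thetab^*$ and establishes the $\bigO{\binom{d+k-1}{k}}$ upper bound, which with the lower bound gives the tight $\bigTheta{\binom{d+k-1}{k}}$.

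The main obstacle is precisely the Veronese-variety issue flagged above: unlike the linear setting one cannot realize the single orthogonality-forcing input, so the construction must either double the orthogonal points (as proposed) or instead exploit the parity $\Phi(-\bx)=(-1)^k\Phi(\bx)$, and in either case one must confirm this only blows the cardinality up by a constant factor, keeping it $\bigO{r}$. A secondary point to handle carefully is that \assref{assumption: polyorthogonal} supplies genuine inputs $\bz_i\in\cX$ rather than arbitrary vectors of $\cH_{\rK}$, which is exactly what makes the two-label construction admissible in the constructive teaching model.
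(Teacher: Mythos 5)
Your proposal is correct and matches the paper's own proof: the paper's teaching set is exactly $\curlybracket{(\bz_i,1)}_{i=1}^{r-1}\cup\curlybracket{(\bz_i,-1)}_{i=1}^{r-1}\cup\curlybracket{(\ba,1)}$ with $\thetab^*\cdot\Phi(\ba)>0$ under the same non-degeneracy assumption, forcing $\hat{\thetab}\cdot\Phi(\bz_i)=0$ and then $\hat{\thetab}=t\thetab^*$, for a set of size $2r-1$. Your explicit discussion of why the negative-sum point from the linear construction cannot be realized in the Veronese image is a nice articulation of the motivation the paper only states implicitly (in its numerical-example paragraph), but the argument itself is the same.
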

\begin{figure*}[t]
\centering
		\begin{subfigure}[b]{0.3\textwidth}
	   \centering
		\includegraphics[width=\linewidth]{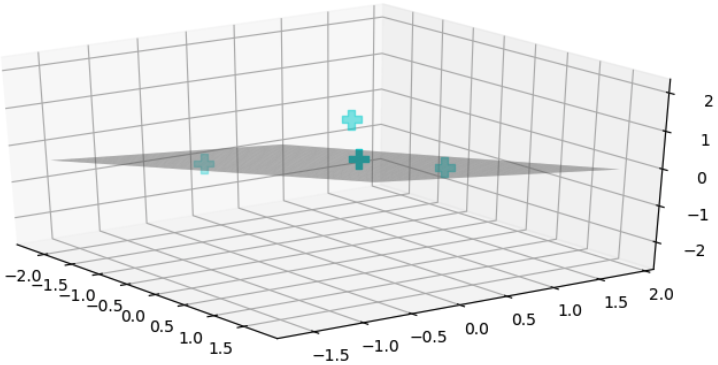}
		\caption{Linear ($\mathcal{TS}$)}\label{fig:exp:exact-teaching:linear}
		\label{fig:example.polytope}
	\end{subfigure}
		\begin{subfigure}[b]{0.3\textwidth}
	    \centering
		\includegraphics[width=\linewidth]{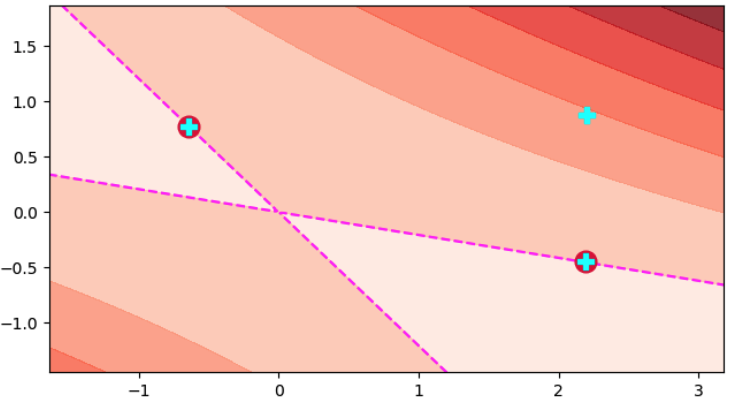}
		\caption{Polynomial ($\mathcal{TS}$)}\label{fig:exp:exact-teaching:polynomial}
	\end{subfigure}	
	\begin{subfigure}[b]{0.3\textwidth}
	    \centering
		\includegraphics[width=\linewidth]{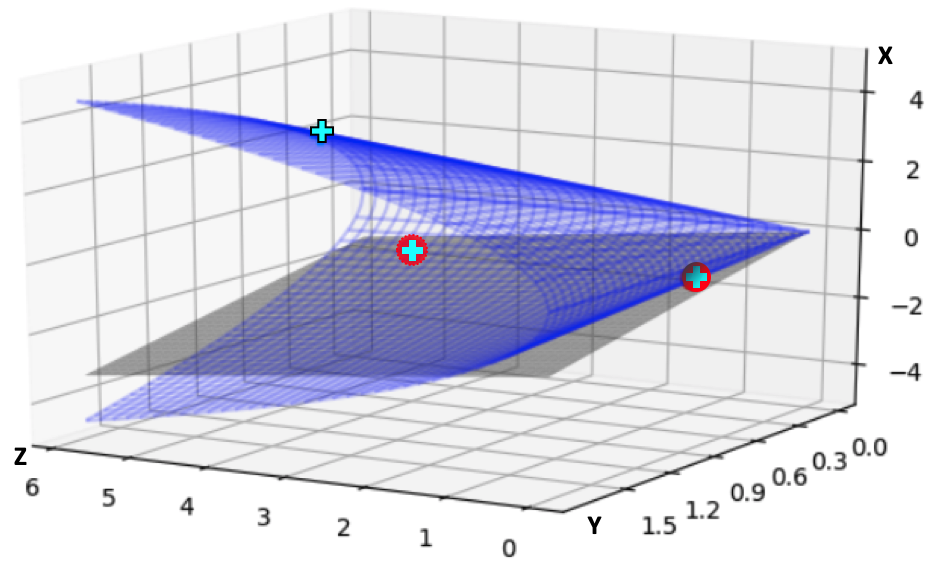}
		\caption{Polynomial (feature space)} \label{fig:exp:feature-space:polynomial}
	\end{subfigure}	
  \caption{Numerical examples of exact teaching for linear and polynomial perceptrons. Cyan plus marks and red dots correspond to positive and negative teaching examples respectively.
  }
	\label{fig:exp:exact-teahcing}
\end{figure*}
\vspace{-2mm}
\paragraph{Numerical example} 
For constructing $\mathcal{TS}$ in the polynomial case, we follow a similar strategy in the higher dimensional space that the original data is projected into. The only difference is that we need to ensure the teaching examples have pre-images in the original space. For that, we adopt a randomized algorithm that solves for $r-1$ boundary points in the original space (i.e. solve for $\theta^*\cdot \Phi(\mathbf{x}) = 0$) , while checking the images of these points are linearly independent. Also, instead of adding a vector in the opposite direction of these points combined, we simply repeat the $r-1$ points in the teaching set, while assigning one copy of them positive labels and the other copy negative labels. Finally, we need one last vector (label it positive) whose image has a positive component in $\theta^*$, and we obtain $\mathcal{TS}$ of size $2r-1$.

\figref{fig:exp:exact-teaching:polynomial} and  \figref{fig:exp:feature-space:polynomial} demonstrate the above constructive procedure on a numerical example with $d=2$, homogeneous polynomial kernel of degree 2, and $\thetab^* = {(1, 4, 4)}^\top$. In \figref{fig:exp:exact-teaching:polynomial} we show the decision boundary (red lines) and the level sets (polynomial contours) of this quadratic perceptron, as well as the teaching set identified via the above algorithmic procedure. In \figref{fig:exp:exact-teaching:polynomial}, we visualize the decision boundary (grey plane) in the feature space (after applying the feature map). The blue surface corresponds to all the data points that have pre-images in the original space $\mathbb{R}^2$.

\subsection{Limitations in Exact Teaching of Polynomial Kernel Perceptron}\label{subsec.motivation}
In the previous section \secref{subsec.poly}, we imposed the \assref{assumption: polyorthogonal} on the target models $\thetab^*$. It turns out that we couldn't do better than this. More concretely, we need to impose this assumption for exact teaching of polynomial kernel perceptron learner. Further, there are pathological cases where violation of the assumption leads to models which couldn't be approximately taught. 

Intuitively, solving \eqnref{eqn: objectkernel} in the paradigm of exact teaching reduces to nullifying the orthogonal subspace of $\thetab^*$ i.e. any component of $\thetab^*$ along the subspace is nullified. Since the information of the span of the subspace has to be encoded into the datapoints chosen for teaching, \assref{assumption: polyorthogonal} is a natural step to make. Interestingly, we show that the step is not so stringent.
In the realizable setting in which all the teaching points are correctly classified, if we lift the assumption then exact teaching is not possible.
We state the claim in the following lemma:
\begin{lemma}\label{lemma: exact teaching}
Consider a target model $\thetab^*$ that doesn't satisfy \assref{assumption: polyorthogonal}. Then, there doesn't exist a teaching set $\mathcal{TS}_{\thetab^*}$ which exactly teaches $\thetab^*$ i.e. for any $\mathcal{TS}_{\thetab^*}$ and any real $t > 0$ $$\cA_{opt}\paren{\mathcal{TS}_{\thetab^*}} \neq \{t\thetab^*\}.$$
\end{lemma}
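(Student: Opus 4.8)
The plan is to characterize the set of minimizers of the perceptron objective \eqnref{eqn: objectkernel} in the realizable setting as a polyhedral cone in $\cH_{\rK}$, and then argue by contraposition that exact teaching forces this cone to collapse onto the ray $\{t\thetab^* : t>0\}$, which in turn forces the existence of the orthogonal polynomials promised by \assref{assumption: polyorthogonal}. Concretely, since every teaching point is correctly classified by $\thetab^*$, the objective attains its minimum value $0$, and the minimizer set is exactly the cone $C := \{\hat\thetab \in \cH_{\rK} : y_i(\hat\thetab\cdot\Phi(\bx_i)) \ge 0 \text{ for all } i\}$. Exact teaching of the decision boundary then amounts to $C$ containing no nonzero element other than positive multiples of $\thetab^*$.

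First I would suppose, for contradiction, that some teaching set $\mathcal{TS}_{\thetab^*}$ exactly teaches $\thetab^*$, and partition its indices into the \emph{active} set $I_0 := \{i : \thetab^*\cdot\Phi(\bx_i) = 0\}$ (points lying on the target decision boundary) and the \emph{strict} set $I_+ := \{i : y_i(\thetab^*\cdot\Phi(\bx_i)) > 0\}$; realizability guarantees these two sets exhaust all indices. The core claim is that the active feature vectors $\{\Phi(\bx_i)\}_{i\in I_0}$ must linearly span the $(r-1)$-dimensional orthogonal complement of $\thetab^*$. To see why, suppose they do not; then there is a nonzero $\bu$ orthogonal to $\thetab^*$ and to every $\Phi(\bx_i)$ with $i\in I_0$. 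Consider the perturbation $\hat\thetab = \thetab^* + \epsilon\bu$: the active constraints remain exactly zero, while the strict constraints remain positive for $|\epsilon|$ small, so $\hat\thetab \in C$; yet $\hat\thetab$ has a nonzero component orthogonal to $\thetab^*$ and hence is not a positive multiple of $\thetab^*$, contradicting exact teaching.

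Finally I would translate the spanning conclusion back into the language of \assref{assumption: polyorthogonal}. Each active vector $\Phi(\bx_i)$ is the feature-map image of an input point $\bx_i \in \cX$ that lies in the orthogonal subspace of $\thetab^*$; extracting a maximal linearly independent subset of a spanning family of that $(r-1)$-dimensional subspace yields exactly $(r-1)$ linearly independent polynomials of the form $\{\Phi(\bz_i)\}_{i=1}^{r-1}$ with $\bz_i\in\cX$ orthogonal to $\thetab^*$ --- precisely the object whose existence \assref{assumption: polyorthogonal} asserts. Thus a target violating the assumption cannot admit such active points, so no teaching set can collapse $C$ onto the ray, and exact teaching is impossible. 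I expect the main obstacle to be the middle step: pinning down that only the \emph{active} (boundary) constraints can eliminate directions orthogonal to $\thetab^*$, together with the geometric fact that these constraints are confined to the image $\{\Phi(\bz):\bz\in\cX\}$ of the feature map, a Veronese-type variety that is a proper subset of $\cH_{\rK}$. It is exactly this confinement, which is absent in the linear case of \thmref{thm: linear perceptron main result} where every direction is realizable as a data point, that makes \assref{assumption: polyorthogonal} unavoidable.
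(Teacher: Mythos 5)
Your proposal is correct and follows essentially the same route as the paper's proof: both hinge on perturbing $\thetab^*$ along a direction in $\mathcal{V}^{\bot}_{\thetab^*}$ that is also orthogonal to every active (boundary) feature vector, observing that the strictly classified points tolerate a small enough perturbation, so the minimizer set cannot collapse to the single ray $\{t\thetab^*\}$. The only cosmetic difference is that you argue by contraposition from the active constraints of the given teaching set, whereas the paper directly picks the perturbation direction orthogonal to the span of \emph{all} feature-map images lying on the target boundary and computes an explicit admissible step size $\lambda$ instead of invoking ``$|\epsilon|$ small.''
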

\lemref{lemma: exact teaching} shows that for \tt{exact} teaching $\thetab^*$ should satisfy \assref{assumption: polyorthogonal}. Then, the natural question that arises is whether we can achieve arbitrarily $\epsilon$-close \tt{approximate} teaching for $\thetab^*$. In other words, we would like to find $\Tilde{\thetab}^*$ that satisfies \assref{assumption: polyorthogonal} and is in $\epsilon$-neighbourhood of $\thetab^*$. We show a negative result for this when $k$ is even. For this we assume that, the datapoints in the teaching set $\mathcal{TS}_{\Tilde{\thetab}^*}$ have lower-bounded norm, call it, $\delta > 0$ i.e. if $({\bx}_i,y_i) \in \mathcal{TS}_{\Tilde{\thetab}^*}$ then $||\Phi({\bx}_i)|| \ge \delta$. We require this additional assumption only for the purpose of analysis. We would show that it wouldn't lead to any pathological cases where the constructed target model $\thetab^*$ incorporates approximate teaching.
\begin{lemma}\label{lemma: approximate teaching }
Let $\cX \subseteq \Rd$ and $\cH_{\rK}$ be the reproducing kernel Hilbert space such that kernel function $\rK$ is of degree $k$. If $k$ has parity even then there exists a target model $\thetab^*$ which violates \assref{assumption: polyorthogonal} and can't be taught approximately.
\end{lemma}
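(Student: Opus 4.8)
The plan is to exhibit one strictly positive target form and then exploit the fact that strict positivity is an \emph{open} property, so that an entire neighborhood of the target fails \assref{assumption: polyorthogonal} and nothing close to it can be taught. Since $k$ is even, $p^*(\bx) = \|\bx\|^k = \paren{\sum_{i=1}^d x_i^2}^{k/2}$ is a genuine homogeneous polynomial of degree $k$; by \propref{prop: polynomial space} it is realized by a unique $\thetab^* \in \cH_{\rK}$ with $\thetab^*\cdot\Phi(\bx) = p^*(\bx)$, and this is the target I take. Its defining feature is strict positivity: $p^*(\bx) > 0$ for all $\bx \neq 0$.

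A feature vector $\Phi(\bz)$ lies in the orthogonal subspace of $\thetab^*$ iff $\thetab^*\cdot\Phi(\bz) = p^*(\bz) = 0$, i.e. iff $\bz = 0$, whence $\Phi(\bz) = \Phi(0) = 0$. Thus the zero vector is the only orthogonal feature vector, so one cannot exhibit even a single nonzero $\Phi(\bz)$ orthogonal to $\thetab^*$, let alone $r-1 \geq 1$ linearly independent ones (here $d \geq 2$, so $r = \binom{d+k-1}{k}\geq 2$). Hence $\thetab^*$ violates \assref{assumption: polyorthogonal}, and by \lemref{lemma: exact teaching} it admits no exact teaching set.

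The key step is that this failure is robust in a neighborhood. Using $\|\Phi(\bx)\|_{\cH_{\rK}}^2 = \rK(\bx,\bx) = \|\bx\|^{2k}$, Cauchy--Schwarz gives, for any $\tilde\thetab^*$ and any $\bx$ on the unit sphere, $|\tilde p^*(\bx) - p^*(\bx)| = |(\tilde\thetab^* - \thetab^*)\cdot\Phi(\bx)| \leq \|\tilde\thetab^* - \thetab^*\|_{\cH_{\rK}}$. Since $\min_{\|\bx\|=1} p^*(\bx) = 1$, every $\tilde\thetab^*$ with $\|\tilde\thetab^* - \thetab^*\|_{\cH_{\rK}} < 1$ stays strictly positive on the sphere, hence positive-definite by homogeneity. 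By the previous paragraph each such $\tilde\thetab^*$ again has trivial real zero set and again violates \assref{assumption: polyorthogonal}; by \lemref{lemma: exact teaching} none of them can be taught exactly either. So there is no exactly teachable model within an RKHS ball around $\thetab^*$.

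It remains to rule out matching $\thetab^*$ in the risk sense of \defref{def:approxteaching}, and here the norm lower bound $\delta$ enters. I would argue that any model $\hat\thetab$ produced by a valid (realizable, norm-$\geq\delta$) teaching set must itself satisfy \assref{assumption: polyorthogonal}, hence lies outside the positive-definite ball above, hence its polynomial $\hat p$ takes \emph{both} signs -- in contrast to the strictly positive $f^*$ -- so $\hat f$ and $f^*$ disagree in sign on a set of positive $\cP$-measure. The bound $\|\Phi(\bx_i)\|\geq\delta$ is then used to fix the scale of $\hat\thetab$ along the ERM cone and turn this sign mismatch into a lower bound on $|\expct{\max(-y f^*(\bx),0)} - \expct{\max(-y\hat f(\bx),0)}|$ that stays bounded away from $0$, contradicting $\epsilon$-approximate teaching for small $\epsilon$. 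The hard part is exactly this translation: proving that \emph{every} assumption-satisfying even-degree form is sign-indefinite -- a real-algebraic-geometry statement about when the feature images of a real variety span the full $(r-1)$-dimensional orthogonal complement -- and converting the resulting sign discrepancy into a \emph{scale-robust} lower bound on the perceptron-risk gap. The openness argument settles the parameter geometry cleanly; the risk-to-structure step via $\delta$ is where the real difficulty lies.
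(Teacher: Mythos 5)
Your construction and your openness argument are correct, but they take a genuinely different route from the paper for the key quantitative step. The paper's witness is $\thetab^* = d^{-1/2}\sum_{i=1}^d \Phi(\be_i)$, i.e.\ the form $d^{-1/2}\sum_i \bx_i^k$, and its notion of ``cannot be taught approximately'' is operationalized as: no exactly teachable $\Tilde{\thetab}^*$ (unit-normalized) satisfies $\thetab^*\cdot\Tilde{\thetab}^*\ge 1-\epsilon$. To refute this the paper argues that such a $\Tilde{\thetab}^*$ would have to possess feature vectors $\Phi(\bx)$ in its orthogonal complement that are \emph{almost orthogonal} to $\thetab^*$, and then uses H\"older's inequality to lower-bound $\thetab^*\cdot\Phi(\bx)$ over all $\bx$ with $\|\Phi(\bx)\|\ge\delta$ --- this is where the $\delta$ assumption enters. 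You instead pick $\|\bx\|^k$ and argue directly that positive-definiteness is open: by Cauchy--Schwarz every $\tilde\thetab^*$ within RKHS distance $1$ of $\thetab^*$ is still strictly positive on the unit sphere, hence has trivial real zero set, hence itself violates \assref{assumption: polyorthogonal} and by \lemref{lemma: exact teaching} is not exactly teachable. This is cleaner: it establishes the same conclusion (no exactly teachable model in a neighborhood of $\thetab^*$) without ever needing the $\delta$ lower bound on teaching-point norms, and since angle-closeness $\thetab^*\cdot\tilde\thetab^*\ge 1-\epsilon$ implies $\|\tilde\thetab^*-\thetab^*\|^2\le 2\epsilon<1$ for small $\epsilon$, your first three paragraphs already prove the lemma in exactly the sense the paper proves it.

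One clarification on your last paragraph: the step you flag as ``where the real difficulty lies'' --- converting the structural failure into a scale-robust lower bound on $\left|\textbf{err}(f^*)-\textbf{err}(\hat f)\right|$ per the risk-based definition of approximate teaching --- is not carried out in the paper either. The paper's proof stops at showing that no exactly teachable model is $\epsilon$-close to $\thetab^*$ in angle; it never produces a distribution $\cP$ and a quantitative perceptron-risk gap. So you should not treat that translation as a missing piece of \emph{your} argument relative to the paper's: what you have already written matches (and slightly strengthens) the paper's actual proof, while your final paragraph describes a stronger statement that remains open in both treatments.
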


The results are discussed in details with proofs \iftoggle{longversion}{in \appref{appendix: motivation}}{in the supplemental materials}. \assref{assumption: polyorthogonal} and the stated lemmas provide insights into understanding the problem of teaching for non-linear perceptron kernels. In the next section, we study Gaussian kernel and the ideas generated here would be useful in devising a teaching set in the paradigm of approximate teaching.
\subsection{Gaussian Kernelized Perceptron}\label{subsec.gaussiankernel}
In this subsection, we consider the Gaussian kernel. Under mild assumptions inspired by the analysis of teaching dimension for exact teaching of linear and polynomial kernel perceptrons, we would establish as our main result an upper bound on the $\epsilon$-approximate teaching dimension of Gaussian kernel perceptrons using a construction of an $\epsilon$-approximate teaching set.
\paragraph{Preliminaries of Gaussian kernel} A Gaussian kernel $\rK$ is a function of the form
\begin{equation}
\rK(\bx, \bx') = \mathbi{e}^{-\frac{||\bx-\bx'||^2}{2\sigma^2}} \label{eqn:eqn11}
\end{equation}
for any $\bx, \bx' \in \Rd$ and parameter $\sigma$. First, we would try to understand the feature map before we find an approximation to it. Notice:
\[\mathbi{e}^{-\frac{||\bx-\bx'||^2}{2\sigma^2}} = \mathbi{e}^{-\frac{||\bx||^2}{2\sigma^2}}\mathbi{e}^{-\frac{||\bx'||^2}{2\sigma^2}}\mathbi{e}^{\frac{\normg{\bx}{\bx'}}{\sigma^2}}  \]
Consider the scalar term $z = \normg{\bx}{\bx'}/\sigma^2$. We can expand the term of the product using the Taylor expansion of $\mathbi{e}^z$ near $z = 0$ as shown in \citet{Cotter2011ExplicitAO}, which amounts to
$\mathbi{e}^{\frac{\normg{\bx}{\bx'}}{\sigma^2}} = \sum_{k=0}^{\infty}\frac{1}{k!}\paren{\frac{\normg{\bx}{\bx'}}{\sigma^2}}^k$. We can further expand the previous sum as
\begin{align}
\mathbi{e}^{\frac{\normg{\bx}{\bx'}}{\sigma^2}} &= \sum_{k=0}^{\infty}\frac{1}{k!}\paren{\frac{\normg{\bx}{\bx'}}{\sigma^2}}^k \nonumber\\
&= \sum_{k=0}^{\infty}\frac{1}{k!\sigma^{2k}}\bigparen{\sum_{l=1}^d \bx_l\cdot\bx'_l}^k \nonumber\\
&=  \sum_{k=0}^{\infty}\frac{1}{k!\sigma^{2k}}\sum_{|\boldsymbol{\lambda}|=k} \cC^k_{\boldsymbol{\lambda}}\cdot\bx^{\boldsymbol{\lambda}}\cdot(\bx')^{\boldsymbol{\lambda}} \label{eqn:eqn12}
\end{align}
where $\cC^k_{\boldsymbol{\lambda}} = \frac{k!}{\prod_{i=1}^d\boldsymbol{\lambda}_i!}$. Thus, we use \eqnref{eqn:eqn12} to obtain explicit feature representation to the Gaussian kernel in \eqnref{eqn:eqn11} as 
$\Phi_{k,\boldsymbol{\lambda}}(\bx) = \mathbi{e}^{-\frac{||\bx||^2}{2\sigma^2}}\cdot \frac{\sqrt{\cC^k_{\boldsymbol{\lambda}}}}{\sqrt{k!}\sigma^k}\cdot\bx^{\boldsymbol{\lambda}}$. 
We get the explicit feature map $\Phi(\cdot)$ for the Gaussian kernel with coordinates as specified. Theorem 1 of \citet{haquangminh} characterizes the RKHS of Gaussian kernel. It establishes that $\dim(\cH_{\rK}) = \infty$.
Thus, we note that the exact teaching for an arbitrary target classifier $f^*$ in this setting has an infinite lower bound. This calls for analysing the teaching problem of a Gaussian kernel in the \tt{approximate} teaching setting.
\paragraph{Definitions and notations for approximate teaching} 
For any classifier $f \in \cH_{\rK}$,
we define $\textbf{err}$($f$) = $\expctover{(\bx,y) \sim \cP(\bx,y)}{\max(-y\cdot f(\bx),0)}$. Our goal is to find a classifier $f$ with the property that its expected true loss $\textbf{err}$($f$) is as small as possible. 
In the realizable setting, we assume that there exists an optimal separator $f^*$ such that for any data instances sampled from the data distribution the labels are consistent i.e. $\cP(y\cdot f^*(\bx) \le 0) = 0$. In addition, we also experiment for the non-realizable setting.
In the rest of the subsection, we would study the relationship between the teaching complexity for an optimal Gaussian kernel perceptron 
for \eqnref{eqn: objectkernel} and $|\textbf{err}(f^*) - \textbf{err}(\hat{f})|$ where $f^*$ is the optimal separator and $\hat{f}$ is the solution to $\cA_{opt}(\mathcal{TS}_{\thetab^*})$ for the constructed teaching set $\mathcal{TS}_{\thetab^*}$.

\subsubsection{Gaussian Kernel Approximation}\label{subsec: gaussian_kernel_approx}
Now, we would talk about finite-dimensional polynomial approximation $\Tilde{\Phi}$ to the Gaussian feature map $\Phi$ via projection as shown in \citet{Cotter2011ExplicitAO}. Consider
\begin{align*}
    \Tilde{\Phi}&: \Rd \longrightarrow \mathbb{R}^q\\
    \Tilde{\rK}(\bx, \bx') &= \Tilde{\Phi}(\bx)\cdot\Tilde{\Phi}(\bx')
\end{align*}
With these approximations, we consider classifiers of the form $\Tilde{f}(\bx) = \Tilde{\thetab}\cdot \Tilde{\Phi}(\bx)$ such that $\Tilde{\thetab} \in \Rq$. Now, assume that there is a projection map $\p$ such that $\Tilde{\Phi} = \p\Phi$.
In \citet{Cotter2011ExplicitAO}, authors used the following approximation to the Gaussian kernel:
\begin{equation}
     \Tilde{\rK}(\bx, \bx') =  \mathbi{e}^{-\frac{||\bx||^2}{2\sigma^2}}\mathbi{e}^{-\frac{||\bx'||^2}{2\sigma^2}}\sum_{k=0}^{s}\frac{1}{k!}\paren{\frac{\normg{\bx}{\bx'}}{\sigma^2}}^k \label{eqn: eqn17}
\end{equation}
This gives the following explicit feature representation for the approximated kernel:
\begin{equation}
\forall k \le s,\quad
\Tilde{\Phi}_{k,\boldsymbol{\lambda}}(\bx) = \Phi_{k,\boldsymbol{\lambda}}(\bx) = \mathbi{e}^{-\frac{||\bx||^2}{2\sigma^2}}\cdot  \frac{\sqrt{\cC^k_{\boldsymbol{\lambda}}}}{\sqrt{k!}\sigma^k}\cdot\bx^{\boldsymbol{\lambda}}   \label{eqn:eqn18}
\end{equation}
where $\Phi_{k,\boldsymbol{\lambda}}(\bx)$ is the coordinate for Gaussian feature map. 
Note that the feature map $\Tilde{\Phi}$ defined by the explicit features in \eqnref{eqn:eqn18} has dimension $\binom{d+s}{d}$. Thus, $\p\Phi = \Tilde{\Phi}$ where the first $\binom{d+s}{d}$ coordinates are retained. We denote the RKHS corresponding to $\Tilde{\rK}$ as $\cH_{\Tilde{\rK}}$.
A simple property of the approximated kernel map is stated in the following lemma which was proven in \citet{Cotter2011ExplicitAO}.
\begin{lemma}[\citet{Cotter2011ExplicitAO}]\label{lemma: approxbound}
For the approximated map $\Tilde{\rK}$, we obtain the following upper bound:
\begin{equation}
    \left|\rK(\bx,\bx) - \Tilde{\rK}(\bx,\bx)\right| \le \frac{1}{(s+1)!}\paren{\frac{||\bx||\cdot ||\bx'||}{\sigma^2}}^{s+1} \label{eqn: approximatekernel}
\end{equation}
\end{lemma}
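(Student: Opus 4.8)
The plan is to start from the explicit factorization of the Gaussian kernel used to derive \eqnref{eqn:eqn12} and to read the approximation error off directly as a tail of the exponential series. Subtracting the truncated kernel \eqnref{eqn: eqn17} from the factorized form of $\rK$ in \eqnref{eqn:eqn11}, the two Gaussian prefactors are common to both terms and factor out of the difference, leaving
\[
\rK(\bx,\bx') - \Tilde{\rK}(\bx,\bx') = \mathbi{e}^{-\frac{||\bx||^2}{2\sigma^2}}\mathbi{e}^{-\frac{||\bx'||^2}{2\sigma^2}}\sum_{k=s+1}^{\infty}\frac{1}{k!}\paren{\frac{\normg{\bx}{\bx'}}{\sigma^2}}^k ,
\]
so that only the discarded high-order Taylor terms of $\mathbi{e}^{\normg{\bx}{\bx'}/\sigma^2}$ survive. (The statement is written with coincident arguments, but the right-hand side involves both $\bx$ and $\bx'$; I prove the bound for a general pair, of which the displayed case is a specialization.)

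Next I would pass to absolute values and apply Cauchy--Schwarz, $|\normg{\bx}{\bx'}| \le ||\bx||\cdot||\bx'||$. Writing $w := ||\bx||\cdot||\bx'||/\sigma^2$, this yields
\[
\left|\rK(\bx,\bx') - \Tilde{\rK}(\bx,\bx')\right| \le \mathbi{e}^{-\frac{||\bx||^2}{2\sigma^2}}\mathbi{e}^{-\frac{||\bx'||^2}{2\sigma^2}}\sum_{k=s+1}^{\infty}\frac{w^k}{k!}.
\]
The crux is to control the factorial tail. Factoring out the leading term and comparing factorial ratios term by term---using $(s+1+i) > i$ for every $i \ge 1$, so that $(s+1)!/(s+1+j)! \le 1/j!$---gives
\[
\sum_{k=s+1}^{\infty}\frac{w^k}{k!} = \frac{w^{s+1}}{(s+1)!}\sum_{j=0}^{\infty}\frac{(s+1)!}{(s+1+j)!}\,w^j \le \frac{w^{s+1}}{(s+1)!}\sum_{j=0}^{\infty}\frac{w^j}{j!} = \frac{w^{s+1}}{(s+1)!}\,\mathbi{e}^{w}.
\]

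Finally I would absorb the stray $\mathbi{e}^{w}$ into the two Gaussian prefactors. Since $w = ||\bx||\cdot||\bx'||/\sigma^2$, we have $\mathbi{e}^{-||\bx||^2/2\sigma^2}\mathbi{e}^{-||\bx'||^2/2\sigma^2}\mathbi{e}^{w} = \mathbi{e}^{-(||\bx||-||\bx'||)^2/2\sigma^2} \le 1$, which collapses the prefactors and leaves exactly $\tfrac{1}{(s+1)!}(||\bx||\cdot||\bx'||/\sigma^2)^{s+1}$, the claimed bound. The main obstacle is precisely this final cancellation: the bare tail estimate $\tfrac{w^{s+1}}{(s+1)!}\mathbi{e}^{w}$ already exceeds the target right-hand side, so the proof only closes because the exponential overshoot is matched exactly by the decay of the Gaussian prefactors. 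Establishing the factorial-ratio inequality that produces the clean $\mathbi{e}^{w}$ factor---rather than a looser Lagrange-remainder bound carrying an uncontrolled $\mathbi{e}^{\xi}$---is the one step that needs care.
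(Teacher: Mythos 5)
Your proof is correct. The paper does not actually prove this lemma itself---it defers to \citet{Cotter2011ExplicitAO}---and your argument (isolate the Taylor tail of $e^{\normg{\bx}{\bx'}/\sigma^2}$, bound it by $\frac{w^{s+1}}{(s+1)!}e^{w}$ via the factorial-ratio comparison after Cauchy--Schwarz, then absorb $e^{w}$ into the Gaussian prefactors through $e^{-(||\bx||-||\bx'||)^2/2\sigma^2}\le 1$) is essentially the standard derivation in that reference; it also correctly reads the statement's $\rK(\bx,\bx)$ as the evident typo for $\rK(\bx,\bx')$.
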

Note that if $s$ is chosen large enough and the points $\bx, \bx'$ are bounded wrt $\sigma^2$, then  RHS of \eqnref{eqn: approximatekernel} can be bounded by any $\epsilon > 0$. Since $\left|\rK(\bx,\bx) - \Tilde{\rK}(\bx,\bx)\right| = \inmod{\p^{\bot}\Phi(\bx)}^2$, thus for a Gaussian kernel, information theoretically, the first $\binom{d+s}{s}$ coordinates are highly sensitive. We would try to analyze this observation under some mild assumptions on the data distribution to construct an $\epsilon$-approximate teaching set. As discussed in \iftoggle{longversion}{\appref{appendix: gaussian perceptron}}{ the supplemental materials}, we would find the value of $s$ as if the datapoints are coming from a ball of radius $R := \max\left\{\frac{\log^2 \frac{1}{\epsilon}}{e^2},d\right\}$ in $\Rd$ i.e. $\frac{\inmod{\bx}^2}{\sigma^2} \le R$. Thus, we wish to solve for the value of $s$ such that $\frac{1}{(s+1)!}\cdot \paren{R}^{s+1} \le \epsilon$.

To approximate $s$ we use Sterling's approximation, 
which states that for all positive integers $n$, we have $$\sqrt{2\pi}n^{n+1/2}e^{-n} \le n! \le en^{n+1/2}e^{-n}.$$
Using the bound stated in \lemref{lemma: approxbound}, we fix the value for $s$ as $e^2\cdot R$. We would assume that $R = \frac{\log^2 \frac{1}{\epsilon}}{e^2}$ since we wish to achieve arbitrarily small $\epsilon$-approximate\footnote{When $R = d$ all the key results follow the same analysis.} teaching set.
We define $r:= r(\thetab^*, \epsilon) = \binom{d+s}{s}$.

\vspace{-1mm}
\subsubsection{Bounding the Error}\label{subsec: bounded_error}
\vspace{-1mm}
In this subsection, we discuss our key results on approximate teaching of a Gaussian kernel perceptron learner under some mild assumptions on the target model $\thetab^*$. In order to show $\left|\textbf{err}(f^*) - \textbf{err}(\hat{f})\right| \le \epsilon$ via optimizing to a solution $\hat{\thetab}$ for \eqnref{eqn: objectkernel}, we would achieve a point-wise $\epsilon$-closeness between $f^*$ and $\hat{f}$. Specifically, we show that $\left|f^*(\bx)-\hat{f}(\bx)\right| \le \epsilon$ universally which is similar in spirit to universal approximation theorems~\cite{Liang2017WhyDN, lu2020universal, Yarotsky2017ErrorBF} for neural networks. We prove that this universal approximation could be achieved with $d^{\bigO{\log^2 \frac{1}{\epsilon}}}$ size teaching set.

We assume that the input space $\cX$ is bounded such that $\forall \bx \in \cX$\,\,  $\frac{\norm{\bx}{\bx}}{\sigma^2} \le 2\sqrt{R}$. Since the motivation is to find classifiers which are close to the optimal one point-wise, thus we assume that target model $\thetab^*$ has unit norm.
As mentioned in \eqnref{eqn: kernelfunction}, we can write the target model $\thetab^* \in \cH_{\rK}$ as $\thetab^* = \sum_{i=1}^l \alpha_i\cdot\rK(\ba_i, \cdot)$ for some $\{\ba_i\}_{i=1}^l \subset \cX$ and $\alpha_i \in \reals$. The classifier corresponding to  $\thetab^*$ is represented by $f^*$. \eqnref{eqn: objectkernel} can be rewritten corresponding to a teaching set $\cD := \curlybracket{\paren{\bx_i,\; y_i}}_{i=1}^n$ as:
\begin{equation}
    \cA_{opt} := \argmin_{\beta \in \reals^l} \sum_{i = 1}^n \max\bigparen{-y_i\cdot \sum_{j=1}^l \beta_j\cdot\rK(\ba_j,\bx_i),\; 0}\label{eqn: bounded1}
\end{equation}

Similar to \assref{assumption: polyorthogonal} (cf \secref{subsec.poly}), to construct an approximate teaching set we assume a target model $\thetab^*$ has the property that for some truncated polynomial space $\cH_{\Tilde{\rK}}$ defined by feature map $\Tilde{\Phi}$ there are linearly independent projections in the orthogonal complement of $\p\thetab^*$ in $\cH_{\Tilde{\rK}}$. More formally, we state the property as an assumption which is discussed in details \iftoggle{longversion}{in \appref{appendix: motivation}}{in the supplemental materials}.
\begin{assumption}[Existence of orthogonal classifiers]\label{assumption: orthogonal} For the target model $\thetab^*$ and some $\epsilon > 0$, we assume that there exists $r= r(\thetab^*, \epsilon)$ 
such that $\mathbb{P}\thetab^*$ 
has $r-1$ linear independent projections on the orthogonal subspace of $\mathbb{P}\thetab^*$ in $\cH_{\Tilde{\rK}}$ of the form $\{\Tilde{\Phi}(\bz_i)\}_{i=1}^{r-1}$ such that $\forall i\,\, \bz_i \in \cX $. 
\end{assumption}
For the analysis of the key results, we impose a smoothness condition on the linear independent projections $\{\Tilde{\Phi}(\bz_i)\}_{i=1}^{r-1}$ that they are oriented away by a factor of $\frac{1}{r-1}$. Concretely, for any $i,j$ $\left|\Tilde{\Phi}(\bz_i)\cdot\Tilde{\Phi}(\bz_j)\right| \le \frac{1}{2(r-1)}$. This smoothness condition is discussed in the supplemental. 
Now, we consider the following reformulation of the optimization problem in \eqnref{eqn: bounded1} as follows:
\begin{align}
    \hspace*{-3mm}\cA_{opt} := \argmin_{\beta_0 \in \reals,\, \gamma \in \reals^{r-1}} \sum_{i = 1}^{2r-1} \max\paren{\ell(\beta_0, \gamma, \bx_i, y_i),\; 0}\label{eqn: bounded}
\end{align}
where for any $i \in \bracket{2r-1}$ \vspace{-3mm}$$ \ell(\beta_0, \gamma, \bx_i, y_i) = -y_i\cdot\bigparen{ \beta_0\cdot \rK(\ba,\bx_i) + \sum_{j=1}^{r-1}\gamma_j\cdot \rK(\bz_j, \bx_i)} \nonumber$$ 
and 
with respect to the teaching set
\begin{align}
    \mathcal{TS}_{\thetab^*} := \curlybracket{\parenb{\bz_i, 1},\, \parenb{\bz_i, -1}}_{i = 1}^{r-1} \cup \curlybracket{\parenb{\ba, 1}}\label{eqn: teaching set}
\end{align}
where $\ba$ is chosen such that $\p\thetab^*\cdot\p\Phi(\ba) > 0$\footnote{We assume $\thetab^*$ is non-degenerate in $\Tilde{\rK}$ (as for polynomial kernels in \secref{subsec.poly}) i.e. has points $\ba \in \cX$ such that $\p\thetab^*\cdot\p\Phi(\ba) > 0$ (classified with label 1).} and $\Phi(\ba)\cdot\Phi(\bz_i) \le Q\cdot \epsilon$ (where $Q$ is a constant). $\ba$ could be chosen from a $\mathcal{B}(\sqrt{2\sqrt{R}\sigma^2},0)$ spherical ball in $\reals^d$. We index the set $\mathcal{TS}_{\thetab^*}$ as $\curlybracket{\paren{\bx_i, y_i}}_{i=1}^{2r-1}$. \eqnref{eqn: bounded} is optimized over $\hat{\thetab} = \beta_0\cdot \rK(\ba,\cdot) + \sum_{j=1}^{r-1}\gamma_j\cdot \rK(\bz_j, \cdot)$ such that $\hat{\thetab}\cdot\Phi(\ba) > 0$ and $\{\Phi(\bz_i)\}_{i=1}^{r-1}$ satisfy \assref{assumption: orthogonal} where $\inmod{\hat{\thetab}} = \bigO{1}$.
\begin{figure*}[t!]
\centering
	\begin{subfigure}[b]{0.28\textwidth}
	   \centering
		\includegraphics[width=\linewidth]{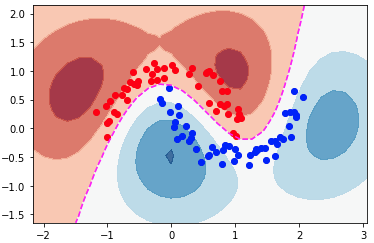}
		\caption{Optimal Gaussian boundary}
		\label{fig:Teacher_RBf}
	\end{subfigure}
	\qquad
	\begin{subfigure}[b]{0.28\textwidth}
	    \centering
		\includegraphics[width=\linewidth]{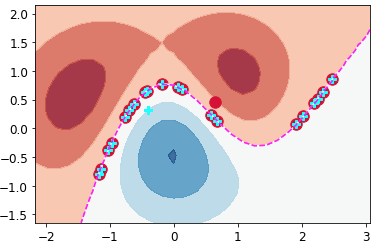}
		\caption{Polynomial approximation}
		\label{fig:Polynomial_approx}
	\end{subfigure}	
	\qquad
	\begin{subfigure}[b]{0.28\textwidth}
	    \centering
		\includegraphics[width=\linewidth]{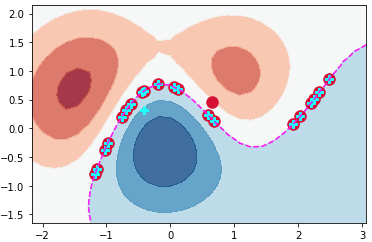}
		\caption{Taught Gaussian boundary}
		\label{fig:Learned_RBF}
	\end{subfigure}	
  \caption{Approximate teaching for Gaussian kernel perceptron. (a) Teacher ``receives'' $\thetab^*$ by training from the complete data set; (b) Teacher identifies a polynomial approximation of the Gaussian decision boundary and generates the teaching set $\mathcal{TS}_{\thetab^*}$ (marked by red dots and cyan crosses); (c) Learner learns a Gaussian kernel perceptron from $\mathcal{TS}_{\thetab^*}$.}
	\label{fig:example-RBF}
\end{figure*}

Note that any solution to $\eqnref{eqn: bounded}$ can have unbounded norm and can extend in arbitrary directions, thus we make an assumption on the learner which would be essential to bound the error of optimal separator of \eqnref{eqn: bounded}.
\begin{assumption}[Bounded Cone]\label{assumption: bounded cone}

For the target model $\thetab^* = \sum_{i = 1}^l \alpha_i\cdot\rK(\ba_i,\cdot)$, the learner optimizes to a solution $\hat{\thetab}$ for \eqnref{eqn: bounded} with bounded coefficients.
Alternatively, the sums $\sum_{i=1}^l \left|\alpha_i\right|$ and $\left|\beta_0\right| + \sum_{j=1}^{r-1}\left|\gamma_j\right|$ are bounded where $\hat{\thetab} \in \cH_{\rK}$ has the form $\hat{\thetab} =  \beta_0\cdot \rK(\ba_j,\cdot) + \sum_{j=1}^{r-1}\gamma_j\cdot \rK(\bz_j, \cdot)$. 


\end{assumption}
This assumption is fairly mild or natural in the sense that for $\hat{\thetab} \in \cA_{opt}$ as a classifier approximates $\thetab^*$ point-wise then they shouldn't be highly (or unboundedly) sensitive to datapoints involved in the classifiers. It is discussed in greater details \iftoggle{longversion}{in \appref{appendix: motivation}}{in the supplemental materials}. We denote by $\boldsymbol{C}_{\epsilon} := \sum_{i=1}^l |\alpha_i|$ and $\boldsymbol{D}_{\epsilon}:= |\beta_0| + \sum_{j=1}^{r-1}|\gamma_j|$. In \iftoggle{longversion}{ \appref{appendixsub: solutionexists}}{ the supplemental materials}, we show that there exists a unique solution (upto a positive scaling) to \eqnref{eqn: bounded} which satisfies \assref{assumption: bounded cone}. We would show that $\mathcal{TS}_{\thetab^*}$ is an $\epsilon$-approximate teaching set with 
$r = d^{\bigO{\log^2 \frac{1}{\epsilon}}}$ on the $\epsilon$-approximate teaching dimension. To achieve this, we first establish the $\epsilon$-\tt{closeness} of $\hat{f}$ (classifier  $\hat{f}(\bx) := \hat{\thetab}\cdot\Phi(\bx)$ where $\hat{\thetab} \in \cA_{opt}$) to $f^*$.
Formally, we state the result as follows:

\begin{theorem}\label{thm: boundedclassifier}
For any target $\thetab^* \in \cH_{\rK}$ that satisfies \assref{assumption: orthogonal}-\ref{assumption: bounded cone}
and $\epsilon > 0$, the teaching set $\mathcal{TS}_{\thetab^*}$ constructed for \eqnref{eqn: bounded} satisfies
    $\left|f^*(\bx) - \hat{f}(\bx)\right| \le \epsilon$
for any $\bx \in \cX$ and any $\hat{f} \in \cA_{opt}(\mathcal{TS}_{\thetab^*})$.
\end{theorem}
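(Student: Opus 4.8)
The plan is to translate the perceptron loss minimization in \eqnref{eqn: bounded} into a set of near-orthogonality constraints on the learner's solution $\hat{\thetab}$, and then transport these constraints from the full Gaussian RKHS $\cH_{\rK}$ into the finite-dimensional truncated space $\cH_{\Tilde{\rK}}$ via the projection $\p$, where \assref{assumption: orthogonal} fixes the geometry. First I would observe that the teaching set $\mathcal{TS}_{\thetab^*}$ in \eqnref{eqn: teaching set} contains each $\bz_i$ with both labels, so the combined perceptron loss contributed by the pair $\{(\bz_i,1),(\bz_i,-1)\}$ equals $\inmod{\hat{\thetab}\cdot\Phi(\bz_i)}$, while the anchor $(\ba,1)$ contributes $\max(-\hat{\thetab}\cdot\Phi(\ba),0)$. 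Hence the objective is bounded below by $0$, and any $\hat{\thetab}\in\cA_{opt}$ attaining this minimum must satisfy $\hat{\thetab}\cdot\Phi(\bz_i)=0$ for all $i\in[r-1]$ together with $\hat{\thetab}\cdot\Phi(\ba)\ge 0$; the existence of such a zero-loss solution of bounded norm under \assref{assumption: bounded cone} is exactly the uniqueness claim established earlier.

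Next I would split $\hat{\thetab}=\p\hat{\thetab}+\p^{\bot}\hat{\thetab}$ and $\Phi(\bz_i)=\Tilde{\Phi}(\bz_i)+\p^{\bot}\Phi(\bz_i)$, so that each constraint reads $\p\hat{\thetab}\cdot\Tilde{\Phi}(\bz_i) = -\,\p^{\bot}\hat{\thetab}\cdot\p^{\bot}\Phi(\bz_i)$. The right-hand side is controlled by Cauchy--Schwarz together with \lemref{lemma: approxbound}: since $\inmod{\p^{\bot}\Phi(\bz_i)}^2=\rK(\bz_i,\bz_i)-\Tilde{\rK}(\bz_i,\bz_i)$ is driven below any prescribed tolerance by the choice $s=e^2R$ and the norm bound $\frac{\normg{\bx}{\bx}}{\sigma^2}\le 2\sqrt{R}$, and since $\inmod{\p^{\bot}\hat{\thetab}}$ is bounded by $\boldsymbol{D}_{\epsilon}$ under \assref{assumption: bounded cone}, each $\inmod{\p\hat{\thetab}\cdot\Tilde{\Phi}(\bz_i)}$ is small. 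Now \assref{assumption: orthogonal} guarantees that $\{\Tilde{\Phi}(\bz_i)\}_{i=1}^{r-1}$ is a basis of the orthogonal complement of $\p\thetab^*$ inside the $r$-dimensional space $\cH_{\Tilde{\rK}}$, so I may write $\p\hat{\thetab}=c\,\p\thetab^*+\bw$ with $\bw=\sum_i b_i\Tilde{\Phi}(\bz_i)$ orthogonal to $\p\thetab^*$. Substituting into the near-orthogonality constraints turns them into a linear system $G\bb$ with small right-hand side, where $G$ is the Gram matrix of $\{\Tilde{\Phi}(\bz_i)\}$. Inverting this system is the crux and the main obstacle: because $\rK(\bz_i,\bz_i)=1$ gives $G_{ii}\approx 1$ while the imposed smoothness condition $\inmod{\Tilde{\Phi}(\bz_i)\cdot\Tilde{\Phi}(\bz_j)}\le\frac{1}{2(r-1)}$ makes $G$ strictly diagonally dominant, a Gershgorin argument bounds $\inmod{G^{-1}}$ and forces $\inmod{\bw}$ to be small --- but one must verify that the accumulated error does not blow up with $r=d^{\bigO{\log^2 \frac{1}{\epsilon}}}$, which is precisely why $s$ is taken large enough that the tail is smaller than $\epsilon$ by a factor polynomial in $r$.

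Finally I would assemble the point-wise bound through the decomposition $f^*(\bx)-\hat{f}(\bx) = (\p\thetab^*-\p\hat{\thetab})\cdot\Tilde{\Phi}(\bx) + \p^{\bot}\thetab^*\cdot\p^{\bot}\Phi(\bx) - \p^{\bot}\hat{\thetab}\cdot\p^{\bot}\Phi(\bx)$. The last two are tail terms bounded by $\boldsymbol{C}_{\epsilon}$ and $\boldsymbol{D}_{\epsilon}$ times $\inmod{\p^{\bot}\Phi(\bx)}$, hence $\bigO{\epsilon}$ via \lemref{lemma: approxbound} after fixing constants. The first term equals $((1-c)\p\thetab^*-\bw)\cdot\Tilde{\Phi}(\bx)$, and since $\inmod{\Tilde{\Phi}(\bx)}\le 1$ it is controlled by $\inmod{1-c}$ and $\inmod{\bw}$, the latter being small by the previous step. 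To pin the scale $c$ I would use that the perceptron objective is positively homogeneous, so $\cA_{opt}$ returns a ray, and select the representative scaling it so that $c=1$ (equivalently matching the $\p\thetab^*$-component of the unit-norm target), with \assref{assumption: bounded cone} ensuring this representative still has $\bigO{1}$ norm so the tail bounds persist; the anchor $\ba$, chosen with $\p\thetab^*\cdot\p\Phi(\ba)>0$ and $\Phi(\ba)\cdot\Phi(\bz_i)\le Q\epsilon$, keeps the solution ray on the correct side of the boundary and nearly aligned with $\p\thetab^*$. Collecting the three $\bigO{\epsilon}$ contributions and rescaling constants yields $\inmod{f^*(\bx)-\hat{f}(\bx)}\le\epsilon$ uniformly over $\bx\in\cX$, as claimed.
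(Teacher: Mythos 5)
Your proposal is correct and follows essentially the same route as the paper's proof: zero perceptron loss on the paired points forces $\hat{\thetab}\cdot\Phi(\bz_i)=0$, the $\p$/$\p^{\bot}$ split plus \lemref{lemma: approxbound} and \assref{assumption: bounded cone} control the tail terms, and the smoothness condition on the Gram matrix of $\{\Tilde{\Phi}(\bz_i)\}$ bounds the component of $\p\hat{\thetab}$ orthogonal to $\p\thetab^*$ (the paper uses a Neumann-series bound in \lemref{lemma: maximum norm} where you invoke diagonal dominance, which is an equivalent device). The only cosmetic difference is how the scale along $\p\thetab^*$ is pinned --- you normalize the ray so $c=1$, whereas the paper lower-bounds $\lambda_r$ via the norm identity $\inmod{\p\hat{\thetab}}^2=\inmod{V_O}^2+\inmod{\lambda_r\p\thetab^*}^2$ --- but both rest on the same $\bigO{1}$-norm assumption.
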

Using \thmref{thm: boundedclassifier}, we can obtain the main result of the subsection which gives an $d^{\bigO{\log^2 \frac{1}{\epsilon}}}$ bound on $\epsilon$-approximate teaching dimension. We detail the proofs \iftoggle{longversion}{in \appref{appendix: gaussian perceptron}}{in the supplemental materials}:

\begin{theorem}\label{thm: gaussian_main_thm}
For any target $\thetab^* \in \cH_{\rK}$ that satisfies \assref{assumption: orthogonal}-\ref{assumption: bounded cone}
and $\epsilon > 0$, the teaching set $\mathcal{TS}_{\thetab^*}$ constructed for \eqnref{eqn: bounded}
is an
$\epsilon$-approximate teaching set with 
$\epsilon$-$TD(\thetab^*,\cA_{opt}) = d^{\bigO{\log^2 \frac{1}{\epsilon}}}$ i.e. for any $\hat{f} \in \cA_{opt}(\mathcal{TS}_{\thetab^*})$, 
    $$\left|\textbf{err}(f^*) - \textbf{err}(\hat{f})\right| \le \epsilon.$$
\end{theorem}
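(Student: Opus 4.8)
The plan is to derive this theorem as a short corollary of the pointwise uniform approximation guarantee in \thmref{thm: boundedclassifier}, combined with the Lipschitz structure of the perceptron loss and an elementary count of $\lvert\mathcal{TS}_{\thetab^*}\rvert$. Two things must be shown: first, that pointwise closeness of the two classifiers transfers to closeness of their expected risks; second, that the constructed teaching set has size $d^{\bigO{\log^2 \frac{1}{\epsilon}}}$. Since the expression on the right is a big-$\mathcal{O}$ bound in the exponent, exhibiting a teaching set of this cardinality suffices to establish the stated complexity.

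For the error bound, I would first record that the map $u \mapsto \max(-y\cdot u, 0)$ is $1$-Lipschitz in $u$, uniformly over $y \in \{-1,1\}$, because $\lvert \max(-y a,0) - \max(-y b, 0)\rvert \le \lvert a - b\rvert$. Applying this with $a = f^*(\bx)$ and $b = \hat{f}(\bx)$ and invoking \thmref{thm: boundedclassifier}, which gives $\lvert f^*(\bx) - \hat{f}(\bx)\rvert \le \epsilon$ for every $\bx \in \cX$, yields the pointwise inequality
\[
\bigl\lvert \max(-y\cdot f^*(\bx),0) - \max(-y\cdot \hat{f}(\bx),0)\bigr\rvert \le \epsilon
\]
for every $(\bx,y)$ in the support of $\cP$. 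Taking expectations over $(\bx,y)\sim\cP$ and using the triangle inequality for integrals then gives $\lvert \textbf{err}(f^*) - \textbf{err}(\hat{f})\rvert \le \epsilon$, which is exactly the claim. Here I would note that the two expectations are finite because $\cX$ is bounded and \assref{assumption: bounded cone} keeps the coefficients of $f^*$ and $\hat{f}$ bounded, so both classifiers are bounded on $\cX$ and the perceptron loss is integrable.

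It remains to bound the cardinality of $\mathcal{TS}_{\thetab^*}$. By the construction in \eqnref{eqn: teaching set}, the set consists of the $r-1$ orthogonal points $\bz_i$ each inserted with both labels, plus a single positively-labeled point $\ba$, so $\lvert\mathcal{TS}_{\thetab^*}\rvert = 2(r-1)+1 = 2r-1$ with $r = \binom{d+s}{s}$. From the choice of truncation order in \secref{subsec: gaussian_kernel_approx} (via \lemref{lemma: approxbound} together with Stirling's approximation), one has $s = e^2 R = \log^2\frac{1}{\epsilon}$. I would then control the binomial coefficient by the elementary estimate $\binom{d+s}{s} = \prod_{j=1}^{s}\frac{d+j}{j} \le (d+1)^{s}$, so that $r \le (d+1)^{\log^2 \frac{1}{\epsilon}} = d^{\bigO{\log^2 \frac{1}{\epsilon}}}$. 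Hence $\lvert\mathcal{TS}_{\thetab^*}\rvert = 2r-1 = d^{\bigO{\log^2 \frac{1}{\epsilon}}}$, which upper-bounds $\epsilon$-$TD(\thetab^*,\cA_{opt})$.

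The genuine difficulty of this result is entirely contained in \thmref{thm: boundedclassifier}; once the uniform pointwise bound is in hand, the present statement follows from the Lipschitz reduction and the counting step. The only place requiring mild care is verifying that the truncation order $s$ chosen to force the kernel-approximation residual of \lemref{lemma: approxbound} below $\epsilon$ is indeed $\Theta\!\left(\log^2 \frac{1}{\epsilon}\right)$, so that the exponent obtained from the binomial bound matches the claimed rate; this is routine given the Stirling estimate.
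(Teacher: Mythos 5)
Your proposal is correct and follows essentially the same route as the paper: the error bound is obtained by combining the pointwise guarantee of \thmref{thm: boundedclassifier} with the $1$-Lipschitzness of $u \mapsto \max(-y\cdot u,0)$ and passing the absolute value inside the expectation, exactly as in the paper's argument. Your explicit count $\lvert\mathcal{TS}_{\thetab^*}\rvert = 2r-1$ with $r = \binom{d+s}{s} = d^{\bigO{\log^2\frac{1}{\epsilon}}}$ matches the bound the paper establishes in its setup of the teaching set.
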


\paragraph{Numerical example} 
\figref{fig:example-RBF} demonstrates the approximate teaching process for a Gaussian learner. We aim to teach the optimal model $\thetab^*$ (infinite-dimensional) trained on a pre-collected dataset with Gaussian parameter $\sigma = 0.9$, whose corresponding boundary is shown in \figref{fig:Teacher_RBf}. Now, for approximate teaching, the teacher calculates $\Tilde{\thetab}$ using the polynomial approximated kernel (i.e. $\Tilde{\rK}$, and in this case, k=5) in \eqnref{eqn: eqn17} and the corresponding feature map in \eqnref{eqn:eqn18}. 
To ensure \assref{assumption: orthogonal} is met while generating teaching examples for $\Tilde{\thetab}$, we employ the randomized algorithm (as was used in \secref{subsec.poly}) with the key idea of ensuring that the teaching examples on the boundary are linearly independent in the approximated polynomial feature space, i.e. $\Tilde{\rK}(\bz_i, \bz_j) = 0$. Finally, the Gaussian learner receives $\mathcal{TS}_{\thetab^*}$ and learns the boundary shown in \figref{fig:Learned_RBF}. Note the slight difference between the boundaries in \figref{fig:Polynomial_approx} and in \figref{fig:Learned_RBF} as the learner learns with a Gaussian kernel.



\section{Conclusion}
We have studied and extended the notion of teaching dimension for optimization-based perceptron learner. We also studied a more general notion of approximate teaching which encompasses the notion of exact teaching. To the best of our knowledge, our exact teaching dimension for linear and polynomial perceptron learner is new; so is the upper bound on the approximate teaching dimension of Gaussian perceptron learner and our analysis technique in general. There are many possible extensions to the present work. For example, one may extend our analysis to relaxing the assumptions imposed on the data distribution for polynomial and Gaussian perceptrons. This can potentially be achieved by analysing the linear perceptron and finding ways to nullify subspaces other than orthogonal vectors. This could enhance the results for both the exact teaching of polynomial perceptron learner to more general case and a tighter bound on the approximate teaching dimension of Gaussian perceptron learner. 
On the other hand, a natural extension of our work is 
to understand the approximate teaching complexity for
other types of ERM learners, e.g. kernel SVM, kernel ridge, and kernel logistic regression. We believe 
the current work and its extensions would enrich our understanding of optimal and approximate teaching and enable novel applications.

\section{Acknowledgements}
Yuxin Chen is supported by NSF 2040989 and a C3.ai DTI Research Award 049755.


\bibliographystyle{icml2021}
\bibliography{reference}


\iftoggle{longversion}{
 \newpage
\onecolumn
\appendix
{\allowdisplaybreaks
  \section{List of Appendices}\label{appendix:table-of-contents}
 First, we  provide the proofs of our theoretical results in full detail in the subsequent sections. We follow these by the experimental evaluations section. The appendices are summarized as follows: 
\begin{itemize}
\item \appref{appendix: polynomial perceptron} provides the proof of \thmref{thm: poly_main_theorem}
\item \appref{appendix: motivation} provides the motivations and key insights into \assumref{assumption: polyorthogonal}{assumption: orthogonal}{assumption: bounded cone}
\item \appref{appendix: gaussian perceptron} provides relevant results and proofs of 
\thmref{thm: boundedclassifier} and \thmref{thm: gaussian_main_thm} (Approximate Teaching Set for Gaussian Learner)
\item \appref{appendix: experimentals} provides the experimental evaluations for the theoretical results on various datasets
\end{itemize}

  \section{Polynomial Kernel Perceptron}\label{appendix: polynomial perceptron}
In this appendix, we would provide the proof for the main result of \secref{subsec.poly} i.e. \thmref{thm: poly_main_theorem}. We would complete the proof by constructing a teaching set for exact teaching. Similar to the proof of \thmref{thm: linear perceptron main result}, the key idea is to find linear independent polynomials on the orthogonal subspace defined by $\thetab^* \in \cH_{\rK}$. Our \assref{assumption: polyorthogonal} would ensure that there are such $r-1$ linear independent polynomials. Rest of the work follows steps inspired as seen in the proof of \thmref{thm: linear perceptron main result}. 
We assume that $\thetab^*$ is non-degenerate and has at least one point in $\Rd$ classified \tt{strictly} positive, and provide the poof below. 
\begin{proof}[Proof of \thmref{thm: poly_main_theorem}]
First, we would show the construction of a teaching set for a target model $\thetab^* \in \cH_{\rK}$. Denote by $\mathcal{V}^{\bot}_{\thetab^*} \subset \cH_k$ ($\cong$ $\cH_{\rK}$ using \propref{prop: polynomial space} i.e isomorphic as vector spaces) the orthogonal subspace of $\thetab^*$. Since $\thetab^*$ satisfies \assref{assumption: polyorthogonal}, thus there exists a set of $r-1$ linearly independent vectors (polynomials because of \propref{prop: polynomial space}) of the form $\{\Phi(\bz_i)\}_{i=1}^{r-1}$ in $\mathcal{V}^{\bot}_{\thetab^*}$ where $\curlybracket{\bz_i}_{i=1}^{r-1} \in \Rd$. Note that $\thetab^*\cdot\Phi(\bz_i) = 0$. Now, pick $\ba \in \Rd$ such that $\thetab^*\cdot\Phi(\ba) >0$ (assuming non-degeneracy). We note that $\curlybracket{(\bz_i,1)}_{i=1}^{r-1}\cup \curlybracket{(\bz_i,-1)}_{i=1}^{r-1}\cup \curlybracket{(\ba,1)}$ forms a teaching set for the decision boundary corresponding to $\thetab^*$. Using similar ideas from the proof of \thmref{thm: linear perceptron main result}, we notice that any solution $\hat{\thetab}$ to \eqnref{eqn: objectkernel} satisfies $\thetab^*\cdot\Phi(\bz_i) = 0$ for the labelled datapoints corresponding to $\Phi(\bz_i)$. Thus, $\hat{\thetab}$ doesn't have any component along $\Phi(\bz_i)$. \eqnref{eqn: objectkernel} is minimized if $\hat{\thetab}\cdot\Phi(\ba) \ge 0$ implying $\hat{\thetab} = t\thetab^*$. Thus, under \assref{assumption: polyorthogonal}, we show an upper bound $\bigO{\binom{d+k-1}{k}}$ on the size of a teaching set for $\thetab^*$.
\end{proof}
\newpage
  \section{Motivation for Assumptions}\label{appendix: motivation}
In this appendix, we discuss the motivations and insights for the key \assumref{assumption: polyorthogonal}{assumption: orthogonal}{assumption: bounded cone} made in \secref{subsec.poly} and \secref{subsec.gaussiankernel}. This appendix is organized in the following way: \appref{appsubsec.limitation} discusses \assref{assumption: polyorthogonal} and provides the proofs of \lemref{lemma: exact teaching} and \lemref{lemma: approximate teaching } in the context of polynomial kernel (see \secref{subsec.poly}); \appref{appsubsec.approxassumtions} discusses the \assref{assumption: orthogonal} and \assref{assumption: bounded cone} in the context of Gaussian kernel perceptron (see \secref{subsec.gaussiankernel}).


\paragraph{Reformulation of a model $\thetab$ as a polynomial form} As noted in \secref{sec.statement}, we consider the reproducing kernel Hilbert space~\cite{learnkernel} $\cH_{\rK}$ which could be spanned by the linear combinations of kernel functions of the form $\rK(\bx, \cdot)$. More concretely,
\begin{equation}
    \cH_{\rK} = \condcurlybracket{\sum_{i=1}^m \alpha_i\cdot\rK(\bx_i,\cdot)}{ m \in \mathbb{N},\, \bx_i \in \cX,\, \alpha_i \in \reals, i = 1,\cdots,m}\nonumber
\end{equation}
Thus, we could write any model $f_{\thetab} \in \cH_{\rK}$ (parametrized by $\thetab$) as $\sum_{i=1}^n \alpha_i\cdot\rK(\bx_i,\cdot)$ for some $n \in \mathbb{N}$, $
\bx_i \in \cX$ for $i \in \bracket{n}$. This interesting because if $\rK(\cdot,\cdot)$ is a polynomial kernel of degree $k$, then
\begin{equation}
    f_{\thetab}(\bx) = \sum_{i=1}^n \alpha_i\cdot\rK(\bx_i,\bx)  = \sum_{i=1}^n \alpha_i\cdot \normg{\bx_i}{\bx}^k = \sum_{i=1}^n \alpha_i\cdot \paren{\bx_{i1}\bx_1+\cdots+\bx_{id}\bx_d}^k \label{eqn: poly form}
\end{equation}
where $\bx_i = \paren{\bx_{i1},\cdots,\bx_{id}}$. Thus, $f_{\thetab}(\cdot)$ could be reformulated as a homogeneous polynomial of degree $k$ in $d$ variables. Notice that for polynomial kernel in \secref{subsec.poly}, for a target model $\thetab^*$ we study the orthogonal projections of the form $\Phi(\bx)$ for $\bx \in \cX$ such that $\thetab^*\cdot\Phi(\bx) = 0$. Alternatively, using \eqnref{eqn: poly form} we wish to solve the polynomial equation:
\begin{equation}
    f_{\thetab^*}(\bx) = 0 \implies \sum_{i=1}^n \alpha_i\cdot \paren{\bx_{i1}\bx_1+\cdots+\bx_{id}\bx_d}^k = 0 \nonumber
\end{equation}
where we denote $\thetab^* := \sum_{i=1}^n \alpha_i\cdot\Phi(\bx_i)$. For \assref{assumption: polyorthogonal} we wish to find $\binom{d+k-1}{k}$ real solutions of the form $\bx' \in \reals^d$, of this equation which are linearly independent. It is well-studied in the literature of polynomial algebra that this equation might not satisfy the required assumption. We construct one such model for the proof of \lemref{lemma: approximate teaching }. This reformulation can be extended for sum of polynomial kernels of the form $\sum_{j=1}^s c_j\cdot{\normg{\bx}{\bx'}}^j$ where $c_j \ge 0$. In \assref{assumption: orthogonal} the reformulation reduces to a variant of the above polynomial equation i.e.
\begin{equation*}
    \sum_{i=1}^n\alpha_i\paren{\sum_{j=1}^s c_i\cdot \paren{\bx_{i1}\bx_1+\cdots+\bx_{id}\bx_d}^j} = 0
\end{equation*}
So far, we discussed a characterization of the notion of orthogonality for a target model $\thetab^*$ in the form of a polynomial equation. This characterization would help us understand \assref{assumption: polyorthogonal} and \assref{assumption: orthogonal}. In \secref{appsubsec.limitation}, we discuss that \assref{assumption: polyorthogonal} is the most natural step to make for exact teaching of a target model. 

\subsection{ Limitation of Exact Teaching: Polynomial Kernel Perceptron}\label{appsubsec.limitation}
In this subsection, we provide the proofs of \lemref{lemma: exact teaching} and \lemref{lemma: approximate teaching } as stated in \secref{subsec.motivation}. These results establish that in the realizable setting, \assref{assumption: polyorthogonal} is required for exact teaching: \lemref{lemma: exact teaching}. Furthermore, there are pathological cases where violation of the assumption leads to models which couldn't be approximately taught: \lemref{lemma: approximate teaching }.

\begin{proof}[Proof of \lemref{lemma: exact teaching}]
We would prove the result by contradiction. Assume that $\mathcal{TS}_{\thetab^*}$ be a teaching set which \tt{exactly} teaches $\thetab^*$. $\mathrm{WLOG}$ we enumerate the teaching set as $\mathcal{TS}_{\thetab^*} = \curlybracket{\paren{\bx_1,y_1},\cdots,\paren{\bx_n,y_n}}$. For the sake of clarity, we would rewrite \eqref{eqn: objectkernel} again
\begin{equation}
    \cA_{opt}(\mathcal{TS}_{\thetab^*}):= \argmin_{\thetab \in \cH_{\rK}}\sum_{i=1}^n \max(-y_i\cdot \thetab^*\cdot\Phi(\bx_i), 0)\label{eq: polyobject}
\end{equation}
 Denote by $\mathcal{V}^{\bot}_{\thetab^*} \subset \cH_k$  the orthogonal subspace of $\thetab^*$. We denote the objective value of \eqnref{eq: polyobject} by $\bp(\thetab) := \sum_{i = 1}^{n} \max(-y_i\cdot\thetab\cdot\Phi(\bx_i),\: 0)$. We further define \tt{effective direction of a teaching point} $\paren{\bx_i,y_i} \in \mathcal{TS}_{\thetab^*}$ in the RKHS $\cH_{\rK}$ as $\dd_i := -y_i\cdot\Phi(\bx_i)$.  
 Because of the realizable setting i.e. all teaching points are correctly classified, it is clear that  
$$-y_i\cdot \thetab^*\cdot\Phi(\bx_i) \le 0 \implies \thetab^*\cdot \dd_i \le 0.$$
Since $\thetab^*$ violates \assref{assumption: polyorthogonal}, thus $\exists$ a unit normalized direction $\hat{\dd} \in \mathcal{V}^{\bot}_{\thetab^*}$ which can't be spanned by $\cS_{0} \triangleq \condcurlybracket{\Phi(\bx)}{ \Phi(\bx) \in \mathcal{V}^{\bot}_{\thetab^*}\,\, \textnormal{for some}\,\, \bx \in \cX}$ such that $\hat{\dd} \perp \mathbf{span}\left\langle\cS_{0}\right\rangle$. Now, we would show that $\exists \lambda > 0$ (real) such that 
\begin{equation}
  \paren{\thetab^* + \lambda\hat{\dd}} \in \cA_{opt}(\mathcal{TS}_{\thetab^*})  \label{eqn: new theta}
\end{equation}
Notice that for some $i$ if $\dd_i \in \mathcal{V}^{\bot}_{\thetab^*}$ then $(\thetab^* + \lambda\hat{\dd})\cdot \dd_i = \thetab^*\cdot \dd_i \le 0$. Now, we consider the case when $\dd_i \notin \mathcal{V}^{\bot}_{\thetab^*}$. We could expand $\dd_i$ as follows:
\begin{equation}
\dd_i = a_i\hat{\dd}^{\perp} + b_i\hat{\dd} \label{eqn: expand}    
\end{equation}
where $a_i$ and $b_i$ are real scalars and $\hat{\dd}^{\perp}$ is normalized orthogonal projection of $\dd_i$ to orthogonal complement (orthogonal subspace) of $\hat{\dd}$. These constructions are illustrated in \figref{fig:lemma1}. Now, we would compute the following dot product:
\begin{align}
    &(\thetab^* + \lambda\hat{\dd})\cdot \dd_i \nonumber\\
\implies& \thetab^*\cdot \dd_i + \lambda\hat{\dd}\cdot (a_i\hat{\dd}^{\perp} + b_i\hat{\dd}) \label{eqn: part1}\\
\implies& \thetab^*\cdot\dd_i + \lambda\hat{\dd}\cdot b_i\hat{\dd} \label{eqn: part2}
\end{align}
\eqnref{eqn: part1} follows using \eqnref{eqn: expand}. In \eqnref{eqn: part2} we note that $\hat{\dd} \perp \hat{\dd}^{\perp}$. If $b_i \le 0$ then $(\thetab^* + \lambda\hat{\dd})\cdot \dd_i \le 0$ as $\thetab^*\cdot \dd_i < 0$. If $b_i > 0$, then to ensure $(\thetab^* + \lambda\hat{\dd})\cdot \dd_i \le 0$, using \eqnref{eqn: part2} we need
$$\lambda \le \frac{-\thetab^*\cdot \dd_i}{b_i}$$
Since, $i$ is chosen arbitrarily thus for all the effective directions $\dd_i \notin \mathcal{V}^{\bot}_{\thetab^*}$  where $b_i > 0$, we pick postive scalar $\lambda$ such that: 
$$\lambda := \min_{i:\, b_i > 0}\frac{-\thetab^*\cdot \dd_i}{b_i}$$
For this choice of $\lambda$ we show that $\thetab^* + \lambda\hat{\dd} \in \cA_{opt}(\mathcal{TS}_{\thetab^*})$. Thus, by definition, $\mathcal{TS}_{\thetab^*}$ as stated above can't teach $\thetab^*$ exactly. Hence, if $\thetab^*$ violates \assref{assumption: polyorthogonal} then we can't teach it exactly in the realizable setting.
\end{proof}

\begin{figure*}[t]
\centering
    \begin{subfigure}[b]{0.30\textwidth}
	    \centering
		\includegraphics[width=\linewidth]{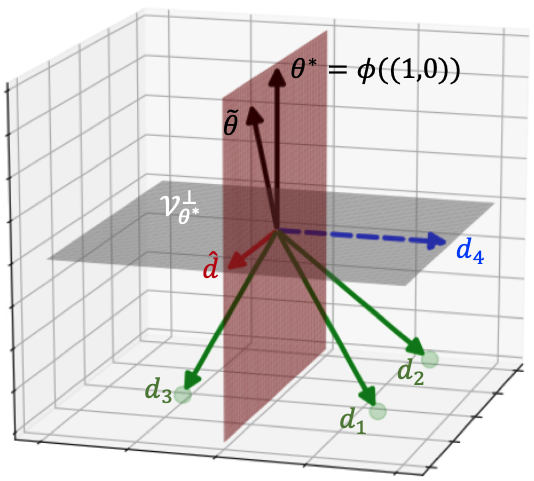}
		\caption{}
		\label{fig:lemma1}
	\end{subfigure}	
	\qquad
	\begin{subfigure}[b]{0.40\textwidth}
	    \centering
		\includegraphics[width=\linewidth]{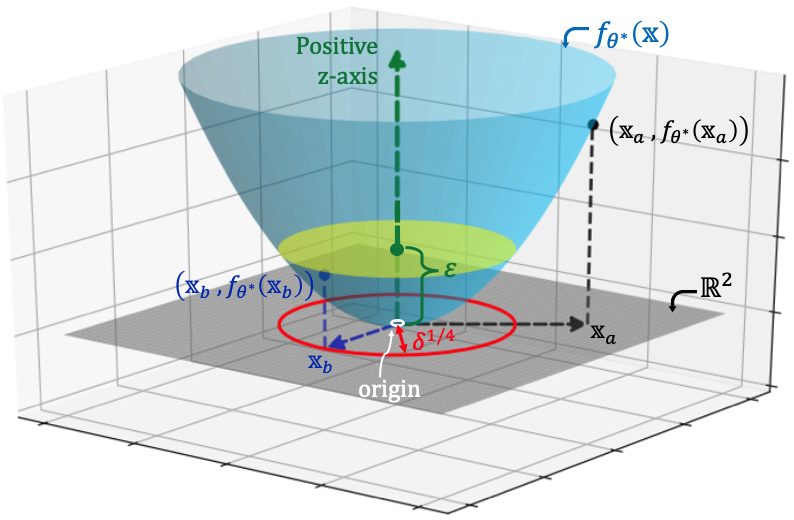}
		\caption{}
		\label{fig:lemma2}
	\end{subfigure}	
  \caption{Illustrations for Proofs of \lemref{lemma: exact teaching} and \lemref{lemma: approximate teaching }. (a) For \lemref{lemma: exact teaching}, consider $\thetab^*$ as shown. $\mathbf{span}\left\langle\cS_{0}\right\rangle$ only covers the direction indicated by the dashed blue arrow. $\Tilde{\thetab}$ correctly labels not only teaching examples on $\mathcal{V}^{\bot}_{\thetab^*}$ and within $\mathbf{span}\left\langle\cS_{0}\right\rangle$, but also those not on $\mathcal{V}^{\bot}_{\thetab^*}$, e.g. along effective directions $d_1$, $d_2$, $d_3$; (b) To visualize the proof idea of \lemref{lemma: approximate teaching }, we demonstrate an example in $\reals^2$ with feature space of dimension 3 (where $k = 2$). We consider a model $\thetab^* = \frac{1}{\sqrt{2}}\cdot\Phi((1,0))+\frac{1}{\sqrt{2}}\cdot\Phi((0,1))$. Since $k$ is even, each point $\textbf{x}$ in $\mathbb{R}^2$ corresponds to a non-negative value of  $f_{\thetab^*}(\textbf{x})$. This function is plotted as the blue surface in (b) along $z$-axis. Yellow surface represents a threshold of $\epsilon$ along $z$-axis; thus any point above it has value more than $\epsilon$. A red $\delta$-norm ring on the $\reals^2$-plane denotes the constraint on the norm of a teaching point ($||\Phi(\bx)|| = \delta \implies ||\bx|| = \delta^{1/4}$). If we are constrained to select only points outside of the red $\delta$-norm ring, then the plot illustrates a situation where no points outside the ring satisfies $f_{\thetab^*}(\textbf{x}) = \thetab^*\cdot\Phi(\bx) < \epsilon$.}
	\label{fig:lemma1-2}
\end{figure*}

Now, we provide the proof of \lemref{lemma: approximate teaching } for which we give a construction of a model $\thetab^*$ which violates \assref{assumption: polyorthogonal} and also show that it can't be taught arbitrarily $\epsilon$-close approximately. The proof is also illustrated in \figref{fig:lemma2}.

\begin{proof}[Proof of \lemref{lemma: approximate teaching }] Assume $\thetab^*$ is a target model which violates \assref{assumption: polyorthogonal}. If $\thetab^*$ can be taught \tt{approximately} for arbitrarily small $\epsilon > 0$ then $\exists$ $\Tilde{\thetab}^*$ which can be taught \tt{exactly} (i.e. satisfies \assref{assumption: polyorthogonal}) such that
$$\thetab^*\cdot \Tilde{\thetab}^* \ge 1-\cos{a_{\epsilon}},\,\, \textnormal{where}\,\, \cos{a_{\epsilon}} = \epsilon$$
if $\thetab^*$ and $\Tilde{\thetab}^*$ are unit normalized. This implies that if $\Phi(\bx) \in \mathcal{V}_{\Tilde{\thetab}^*}^{\perp} \subset \cH_{\rK}$ (orthogonal complement of $\Tilde{\thetab}^*$) such that $||\Phi(\bx)|| \le 1$ then the following holds:
\begin{equation}
  |\thetab^*\cdot \Phi(\bx)| \le \epsilon  \label{eqn: almost orth}
\end{equation}
Alternatively, we could think of $\Phi(\bx)$ as being almost orthogonal to $\thetab^*$.

Now, we would show a construction of a target model when $k$ has parity even, which not only violates \assref{assumption: polyorthogonal} but can't be taught \tt{approximately} such that \eqnref{eqn: almost orth} holds. The idea is to find $\thetab^*$ which doesn't have almost orthogonal projections in $\cH_{\rK}$ with norm lower-bounded by $\delta$.

Consider the following construction for a target model $\thetab^* \in \cH_{\rK}$:
\begin{equation}
    \thetab^* = \sum_{i=1}^d \frac{1}{\sqrt{d}}\cdot \Phi(\be_i),\quad ||\thetab^*|| = 1 \label{eqn: thetaconstruction1}
\end{equation}
where $\{\be_i\}$'s form the standard basis in $\reals^d$. Notice that for any $\bx \in \cX$, 
\begin{equation}
  \thetab^*\cdot \Phi(\bx) = \sum_{i=1}^d \frac{1}{\sqrt{d}}\cdot \Phi(\be_i)\cdot\Phi(\bx) = \sum_{i=1}^d \frac{1}{\sqrt{d}}\cdot \bx_i^k.  \label{eqn: thetaconstruction2}
\end{equation}
RHS of the above equation is zero only when all the $\bx_i^k = 0$ since $k$ is even. Thus, the only projection orthogonal to $\thetab^*$ is the zero projection in $\cH_{\rK}$, thus violates \assref{assumption: polyorthogonal}. Now, we show that $\thetab^*$ as constructed in \eqnref{eqn: thetaconstruction1} can't be taught approximately for arbitrarily small $\epsilon > 0$.

If $\bx \in \cX$ is such that $||\Phi(\bx)|| \ge \delta$, then using H\"{o}lder's inequality:
\begin{align}
    \sum_{i=1}^d \bx_i^2 \le \sum_{i=1}^d 1\cdot \bx_i^2 \le d^{\frac{k-2}{k}}\paren{\sum_{i=1}^d \paren{\bx_i^2}^{\frac{k}{2}}}^{\frac{2}{k}} \nonumber
\end{align}
But we have $\sum_{i=1}^d \bx_i^2 \ge \delta^{\frac{2}{k}}$. Thus, using \eqnref{eqn: thetaconstruction1}-\eqnref{eqn: thetaconstruction2}
\begin{equation}
    \paren{\sum_{i=1}^d \bx_i^k} \ge \frac{\delta}{d^{2(k-2)}} \implies \thetab^*\cdot\Phi(\bx) \ge \frac{\delta}{d^{2k-\frac{7}{2}}} \nonumber
\end{equation}
Thus, if $\epsilon < \frac{\delta}{d^{2k-\frac{7}{2}}}$ then $\thetab^*\cdot\Phi(\bx) > \epsilon$. This implies that $\Phi(\bx)$ can't be chosen almost orthogonal to $\thetab^*$ violating \eqnref{eqn: almost orth}. Hence, $\nexists$ $\Tilde{\thetab}^*$ arbitrarily close to $\thetab^*$ which can be taught exactly.

Thus, the construction of $\thetab^*$ in \eqnref{eqn: thetaconstruction1} violates \assref{assumption: polyorthogonal} and can't be taught approximately for arbitrarily small $\epsilon > 0$.
\end{proof}
\paragraph{Is the assumption of lower bound $\delta$ restrictive?} Now, we would argue that the assumption of a lower bound on the norm of the teaching point for \lemref{lemma: approximate teaching } is only for analysis of the proof presented above. Consider the target model $\thetab^*$ constructed in \eqnref{eqn: thetaconstruction1}. Consider that $\exists$ $\Tilde{\thetab}^*$ which can be taught exactly using arbitrarily small normed teaching points (i.e. lower bound of $\delta$ is violated) such that
$$\thetab^*\cdot \Tilde{\thetab}^* \ge 1-\cos{a_{\epsilon}},\,\, \textnormal{where}\,\, \cos{a_{\epsilon}} = \epsilon$$ for arbitrarily small $\epsilon > 0$. Define the teaching set as $\mathcal{TS}_{\Tilde{\thetab}^*}$. But, even if we unit-normalize all the teaching points, call the normalized set $\mathcal{TS}^{unit}_{\Tilde{\thetab}^*}$, \eqnref{eqn: objectkernel} is still satisfied. Since in that case for any $(\bx_i,y_i) \in \mathcal{TS}^{unit}_{\Tilde{\thetab}^*}$, \eqnref{eqn: almost orth} is violated. Hence, violating the assumption of lower bound on the norm of the teaching points doesn't invalidate the claim of \lemref{lemma: approximate teaching }.

\subsection{Approximate Teaching: \assref{assumption: orthogonal} and \assref{assumption: bounded cone}}\label{appsubsec.approxassumtions}
As noted in \secref{subsec.gaussiankernel}, the teaching dimension of a Gaussian kernel perceptron learner is $\infty$. This calls for studying these non-linear kernel in the setting of approximate teaching. Inspired by our discussion in the previous subsection, we argue that the underlying assumptions: \assref{assumption: orthogonal} and \assref{assumption: bounded cone} are fairly mild in order to establish strong results stated in \thmref{thm: boundedclassifier} and \thmref{thm: gaussian_main_thm} (cf. \secref{subsec.gaussiankernel}). This appendix subsection is divided into two paragraphs corresponding to the assumptions as follows:

\paragraph{Existence of orthogonal linear independent projections: \assref{assumption: orthogonal}.} Notice that the projected polynomial space or the approximated kernel $\Tilde{\rK}$ is a sum of polynomial kernels. We rewrite \eqnref{eqn: eqn17} for ease of clarity:
\begin{equation*}
     \Tilde{\rK}(\bx, \bx') =  \mathbi{e}^{-\frac{||\bx||^2}{2\sigma^2}}\mathbi{e}^{-\frac{||\bx'||^2}{2\sigma^2}}\sum_{k=0}^{s}\frac{1}{k!}\paren{\frac{\normg{\bx}{\bx'}}{\sigma^2}}^k 
\end{equation*}
If we replace $z = \frac{\normg{\bx}{\bx'}}{\sigma^2}$, we could write 
\begin{equation*}
    \Tilde{\rK}(\bx, \bx') =  \mathbi{e}^{-\frac{||\bx||^2}{2\sigma^2}}\mathbi{e}^{-\frac{||\bx'||^2}{2\sigma^2}}\sum_{k=0}^{s}\frac{1}{k!}\cdot z^k
\end{equation*}
Since all the coefficients of the polynomial $\sum_{k=0}^{s}\frac{1}{k!}\cdot z^k$ are positive thus if $s$ is even then $\Tilde{\rK}(\bx, \bx') > 0$. Thus, if $\thetab^* = \sum_{i=1}^l \alpha_i\cdot\rK(\ba_i, \cdot)$ for some $\{\ba_i\}_{i=1}^l \subset \cX$ such that $\alpha_i$'s are positive then \assref{assumption: orthogonal} would be violated. Hence, there is a class of $r$ values for which the assumption would be violated.

It is straight-forward to note that \lemref{lemma: exact teaching} could be extended to sum of polynomial kernels. Similar extension for \lemref{lemma: approximate teaching } when the highest degree is of parity even could be established. These results follow by noting the polynomial space of homogeneous polynomials of degree $k$ in $d$ variables is isomorphic \cite{article} to the polynomial space of degree $k$ in $(d-1)$ variables. Since Hilbert space $\Tilde{\rK}$ is a sum of polynomial kernels thus the extended results hold. This implies that there could be pathological cases where $\p\thetab^*$ could not be learnt approximately in $\Tilde{\rK}$. But this poses a problem because most of the information of a model in terms of the eigenvalues of the orthogonal basis of the Gaussian kernel is contained in the starting indices i.e. $\forall k \le s,\quad
\Phi_{k,\boldsymbol{\lambda}}(\bx) = \mathbi{e}^{-\frac{||\bx||^2}{2\sigma^2}}\cdot  \frac{\sqrt{\cC^k_{\boldsymbol{\lambda}}}}{\sqrt{k!}\sigma^k}\cdot\bx^{\boldsymbol{\lambda}} $ where $\sum_{i=1}^d\boldsymbol{\lambda}_i = k$. It has been discussed in \appref{appendix: gaussian perceptron}. Since the fixed Hilbert space induced by $\Tilde{\rK}$ is spanned by these truncated projections, thus \assref{assumption: orthogonal} gives a characterization for approximately teachable models. It is left to be understood if there is a more unified characterization which could incorporate approximately teachable models beyond \assref{assumption: orthogonal}.

\paragraph{Boundedness of weights: \assref{assumption: bounded cone}.} It is fairly natural in the sense that in \thmref{thm: boundedclassifier} we are bounding (approximating) the error values of the function point-wise i.e. $f^*$ (using $\hat{f}$) for a fixed $\epsilon$. If for some $\hat{\boldsymbol{\theta}} \in \mathcal{A}_{opt}$, $\hat{f}$ ( $= \hat{\boldsymbol{\theta}}\cdot\Phi(\cdot)$) is unboundedly sensitive to some teaching (training) point, then bounding error becomes stringent. Further, we show that there exists a unique solution up to a positive constant scaling to \eqnref{eqn: bounded} which satisfy the assumption in \appref{appendixsub: solutionexists}.

The weights $\{\alpha_i\}_{i=1}^l$ and $[\beta_0,\gamma]$ could be thought of as Lagrange multipliers for Gaussian kernel perceptron. Boundedness of the multipliers is a well-studied problem in the mathematical programming and optimization literature  \cite{gauvin,nooshin,dutta,locallipschitz}. Interestingly, \citet{luksan} demonstrated the importance
of the boundedness of the Lagrange multipliers for the study of interior point methods for non-linear programming. On the other hand, \assref{assumption: bounded cone} as a regularity condition provides new insights into solving problems where task is to universally approximate the underlying functions as discussed in the proof of \thmref{thm: boundedclassifier} in \appref{appendixsub: proofofmainthm}.
\newpage

  \section{Gaussian Kernel Perceptron}\label{appendix: gaussian perceptron}
In this appendix, we would provide the proofs to the key results: \thmref{thm: boundedclassifier} and \thmref{thm: gaussian_main_thm}, as shown in \secref{subsec: bounded_error}. The key to establishing the results is to provide a constructive procedure for an approximate teaching set. Under the \assref{assumption: orthogonal} and \assref{assumption: bounded cone}, when the Gaussian learner optimizes \eqnref{eqn: bounded} w.r.t the teaching set, any solution $\hat{\thetab} \in \cA_{opt}$ would be $\epsilon$-close to the optimal classifier point-wise, thereby bounding the error on the data distribution $\cP$ on the input space $\cX$. We organize this appendix as follows: in \appref{appendixsub: solutionexists} we show that there exists a solution to \eqnref{eqn: bounded}; in \appref{appendixsub: proofofmainthm} we provide the proofs for our key results \thmref{thm: boundedclassifier} and \thmref{thm: gaussian_main_thm}.

\paragraph{Truncating the Taylor features of Gaussian kernel.} In \secref{subsec: gaussian_kernel_approx}, we showed the construction of the projection $\p$ such that $\p\Phi$ forms a feature map for the kernel $\Tilde{\rK}$. We denote the orthogonal projection to $\p$ by $\p^{\bot}$. Thus, we can write $\Phi(\bx) = \p\Phi(\bx) + \p^{\bot}\Phi(\bx)$ for any $\bx \in \Rd$. We discussed the choice of $R$ and $s$. The primary motivation to pick them in the certain way is to retain maximum information in the first $\binom{d+s}{s}$ coordinates of $\Phi(\cdot)$. This is in line with the observation that the eigenvalues of the canonical orthogonal basis~\cite{article} (also eigenvectors) for the Gaussian reproducing kernel Hilbert space $\cH_{\rK}$ decays with higher-indexed coordinates, thus the more sensitive eigenvalues are in the first $\binom{d+s}{s}$ coordinates.
Thus, if we could show that $\p\hat{\thetab}$ is $\epsilon$-approximately close
to $\p\thetab^*$ where $\hat{\thetab} \in \cA_{opt}$ is a solution to \eqnref{eqn: bounded}, then $\hat{\thetab}$ also would be $\epsilon$-approximately close to $\thetab^*$. 

\paragraph{What should be an optimal $R$ vs. choice of the index $s$?} In \secref{subsec: gaussian_kernel_approx}, we solved for $s$ such that 
$$\frac{1}{(s+1)!}\cdot \paren{R}^{s+1} \le \epsilon$$
If $\bx \in \mathcal{B}(\sqrt{2\sqrt{R}\sigma^2}, 0)$ then using \lemref{lemma: approxbound} we have 
\begin{equation}
  \inmod{\p^{\perp}\Phi(\bx)}^2 \le \frac{1}{(s+1)!}\cdot \paren{\sqrt{R}}^{s+1} \le \frac{\epsilon}{\paren{\sqrt{R}}^{s+1}} \le \frac{\epsilon}{\paren{\sqrt{d}}^{s}}  \label{eqn: largeeps}
\end{equation}
where the last inequality follows as $R := \max\left\{\frac{\log^2 \frac{1}{\epsilon}}{e^2},d\right\}$. We define $\epsilon_s := \frac{\epsilon}{\paren{\sqrt{d}}^{s}}$. Note that \eqnref{eqn: largeeps} holds for all $\bx \in \cX$\, since   $\frac{\norm{\bx}{\bx}}{\sigma^2} \le 2\sqrt{R}$. This factor $\paren{\sqrt{d}}^{s}$ in the denominator of $\epsilon_s$ would be useful in nullifying any $\sqrt{r}$ term since $r = \binom{d+s}{s} = \bigO{d^{s}}$.  

\subsection{Construction of a Solution to \eqnref{eqn: bounded}}\label{appendixsub: solutionexists}
In this subsection, we would show that $\eqnref{eqn: bounded}$ has a minimizer $\hat{\thetab} \in \cA_{opt}$ such that $\mathbf{p}(\hat{\thetab}) = 0$ where $\mathbf{p}(\cdot)$ is the objective value. 
Notice that for any $i$ the teaching points $\curlybracket{(\bz_i, 1), (\bz_i, -1)}$ are correctly classified only if $\hat{\thetab}\cdot\Phi(\bz_i) = 0$ and $\hat{\thetab}\cdot\Phi(\ba) > 0$.
We define the set $\boldsymbol{\mathrm{B}} = \curlybracket{\bb_1,\bb_2,\cdots,\bb_{r}}$ to represent $\{\bz_i\}_{i=1}^{r-1}\cup \{\ba\}$ in that order.
We define the Gaussian kernel Gram matrix $\boldlam$ corresponding to $\boldsymbol{\mathrm{B}}$ as follows:
\begin{equation}
    \boldlam[i,j] = \rK(\bb_i,\bb_j)\quad \forall i,j \in \bracket{r}
\end{equation}
Since $\{\bz_i\}_{i=1}^{r-1}$ and $\ba$ could be chosen from $\mathcal{B}(\sqrt{2\sqrt{R}\sigma^2},0)$ as for any two points $\bx, \bx'$ in the teaching set $\frac{\inmod{\bx - \bx'}^2}{2\sigma^2} = \bigTheta{\log \frac{1}{\epsilon}}$ thus all the non-diagonal entries of $\boldlam$ could be bounded as $\bigTheta{\epsilon}$. 
Thus, the non-diagonal entries of $\boldlam$ are upper bounded w.r.t to the choice of $\epsilon$. 
We denote the concatenated vector of $\gamma$ and $\beta_0$ by $\boldeta$ as $\boldeta := (\gamma^\top,\beta_0)^\top$. Consider the following matrix equation:
\begin{equation}
    \boldlam\cdot \boldeta = \parenb{\underbrace{0,\cdots,0}_{\textnormal{For}\; \bz_i's},\underbrace{\vphantom{0,\cdots,0}1}_{\textnormal{For}\; \ba}}^{\top}\label{eqn: satisfyobjective}
\end{equation}
Notice that any solution $\boldeta$ to \eqnref{eqn: satisfyobjective} has zero objective value for \eqnref{eqn: bounded}. Since $\sum_{i=1}^r \paren{\boldlam[i,r]\cdot\boldeta_i} = \hat{\thetab}\cdot\Phi(\ba) > 0$ thus we scale the last component of \eqnref{eqn: satisfyobjective} to 1. First, we observe that \eqnref{eqn: satisfyobjective} has a solution because Gaussian kernel Gram matrix $\boldlam$ to the finite set of points is \tt{strictly positive definite} implying $\boldlam$ is invertible. Thus, there is a unique solution $\boldeta_0 \in \reals^{r}$ such that:
\begin{equation}
    \boldlam\cdot \boldeta_0 = \nu^{\top}\nonumber
\end{equation}
where $\nu := (0,0,\cdots,1)$ as shown in \eqnref{eqn: satisfyobjective}. Also,  $\boldeta_0^{\top}\cdot\boldlam\cdot\boldeta_0 = \beta_0 > 0$. Now, we need to ensure that $\boldeta_0$ satisfies \assref{assumption: bounded cone}. To analyse the boundedness, we rewrite the above equation as:
\begin{equation}
    \boldeta_0 = \boldlam^{-1}\cdot \nu^{\top}\nonumber
\end{equation}
To evaluate the entries of $\boldeta_0$, we only need to understand the last column of $\boldlam^{-1}$ (since $\boldlam^{-1}\cdot \nu^{\top}$ contains entries from the last column of $\boldlam^{-1}$). Using the construction of the inverse using the minors of $\boldlam$, we note that $\boldlam^{-1}[i,r] = \frac{1}{\det(\boldlam)}\cdot M_{(i,r)}$, where $M_{(i,r)}$ is the minor of $\boldlam$ corresponding to entry indexed as $(i, r)$ (determinant of the submatrix of $\boldlam$ formed by removing the \tt{i}th row and \tt{r}th column). Note, determinant is an alternating form, which for a square matrix $T$ of dimension $n$ has the explicit sum
$\sum_{\sigma \in S^n}sign(\sigma)\cdot\prod_{i}^n u_{i,\sigma(i)}$, where $T[i,j] = u_{i,j}$. Since the non-diagonal entries of $\boldlam$ are bounded by $\epsilon$, thus we can bound the minors. Note, $|M_{r,r}| \ge 1  - \bigO{\epsilon^2} + \cdots + (-1)^{r-1} \bigO{\epsilon^{r-1}}$ and for $i \neq r$ $|M_{i,r}| \le \bigO{\epsilon} + \bigO{\epsilon^2} + \cdots + \bigO{\epsilon^{r-1}}$. Since, $\epsilon$ is sufficiently small, thus $|M_{r,r}|$ majorizes over  $|M_{i,r}|$ for $i \neq r$. But then $\boldeta_0 = \frac{1}{\det(\boldlam)}\paren{(-1)^{1+r}M_{1, r}, (-1)^{2+r}M_{2, r}, \cdots, (-1)^{r+r}M_{r, r}}^{\top}$. When we normalize $\thetab$, we get $\bar{\boldeta}_0 = \boldeta_0/\inmod{\thetab}$. We note that, $\inmod{\thetab}^2 = \boldeta_0^{\top}\cdot\boldlam\cdot\boldeta_0 = \beta_0$, implying  $\bar{\boldeta}_0 = \boldeta_0/\sqrt{\beta_0}$. Since, $\beta_0 = \frac{1}{\det(\boldlam)}\cdot M_{r,r}$, thus entries of $\bar{\boldeta}_0$ satisfy \assref{assumption: bounded cone}. Thus, we have a solution to \eqnref{eqn: bounded} which satisfies \assref{assumption: bounded cone}. 

\subsection{Proof of \thmref{thm: boundedclassifier} and \thmref{thm: gaussian_main_thm}}\label{appendixsub: proofofmainthm}
In this section, we would establish our key results for the approximate teaching of a Gaussian kernel perceptron. Under the \assref{assumption: orthogonal} and \assref{assumption: bounded cone}, we would show to teach a target model $\thetab^*$ $\epsilon$-approximately we only require at most $d^{\bigO{\log^2\frac{1}{\epsilon}}}$ labelled teaching points from $\cX$. In order to achieve the $\epsilon$-approximate teaching set, we would show that the teaching set $\mathcal{TS}_{\thetab^*}$ as constructed in \eqnref{eqn: teaching set} achieves an $\epsilon$-closeness between $f^* = \thetab^*\cdot\Phi(\cdot)$ and $\hat{f} = \hat{\thetab}\cdot\Phi(\cdot)$ i.e $ \left|f^*(\bx) - \hat{f}(\bx)\right| \le \epsilon$ point-wise.

Before we move to the proofs of the key results, we state the following relevant lemma which bounds the length (norm) of a vector spanned by a basis with the smoothness condition on the basis as mentioned in \secref{subsec: bounded_error}. 

\begin{lemma}\label{lemma: maximum norm}
Consider the Euclidian space $\reals^n$. Assume $\curlybracket{\bv_i}_{i=1}^n$ forms a basis with unit norms. Additionally, for any $i,j$ $\left|\bv_i\cdot \bv_j\right| \le \cos{\theta_0}$ where $ \cos{\theta_0} \le \frac{1}{2n}$.
Fix a small real scalar $\epsilon > 0$. Now, consider any random vector $\bp \in \reals^n$ such that $\forall i \in \bracket{n}$ $|\bp \cdot \bv_i| \le \epsilon$. Then the following bound on $\bp$ holds: 
$$||\bp||_2 \le \sqrt{2n}\cdot \epsilon.$$
\end{lemma}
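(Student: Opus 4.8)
The plan is to pass to the Gram matrix of the basis and exploit the near-orthogonality hypothesis to show it is well-conditioned. Write $V=[\bv_1\,\cdots\,\bv_n]$ for the matrix whose columns are the basis vectors, and let $G=V^\top V$ be the associated Gram matrix, so that $G_{ij}=\bv_i\cdot\bv_j$. By the unit-norm assumption the diagonal of $G$ is all ones, and by the smoothness hypothesis every off-diagonal entry satisfies $|G_{ij}|\le\frac{1}{2n}$; hence $G=I+E$ with $E$ symmetric, zero on the diagonal, and $\|E\|$ small.

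First I would record the immediate consequence of the hypothesis $|\bp\cdot\bv_i|\le\epsilon$: the vector $V^\top\bp$, whose $i$-th coordinate is exactly $\bp\cdot\bv_i$, satisfies $\|V^\top\bp\|_2\le\sqrt{n}\,\epsilon$. Next, since $\{\bv_i\}$ is a basis I can write $\bp=V\mathbf{c}$ uniquely; multiplying by $V^\top$ gives $V^\top\bp=G\mathbf{c}$, and because $G$ is invertible, $\mathbf{c}=G^{-1}V^\top\bp$. Substituting into $\|\bp\|_2^2=\mathbf{c}^\top V^\top V\mathbf{c}=\mathbf{c}^\top G\mathbf{c}$ yields the clean identity $\|\bp\|_2^2=(V^\top\bp)^\top G^{-1}(V^\top\bp)$, so that $\|\bp\|_2^2\le\|G^{-1}\|\cdot\|V^\top\bp\|_2^2\le n\,\|G^{-1}\|\,\epsilon^2$.

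It remains to bound the operator norm of $G^{-1}$, which is the crux of the argument. I would control $E$ by its maximum absolute row sum: since $E$ has $n-1$ off-diagonal entries per row, each at most $\frac{1}{2n}$ in magnitude, every row sum is at most $(n-1)/(2n)<\tfrac12$, and for a symmetric matrix Gershgorin's theorem gives $\|E\|\le\tfrac12$. Consequently all eigenvalues of $G=I+E$ lie in $[\tfrac12,\tfrac32]$, so $\|G^{-1}\|\le(1-\tfrac12)^{-1}=2$. Plugging this into the previous bound gives $\|\bp\|_2^2\le 2n\,\epsilon^2$, i.e.\ $\|\bp\|_2\le\sqrt{2n}\,\epsilon$, as claimed.

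The only real obstacle is the conditioning estimate $\|G^{-1}\|\le 2$; everything else is linear algebra bookkeeping. The near-orthogonality factor $\frac{1}{2n}$ in the hypothesis is precisely calibrated so that the total off-diagonal mass in each row stays below $\tfrac12$, which is what keeps $G$ bounded away from singularity \emph{uniformly} in $n$. If the pairwise correlations were only required to be $o(1)$ rather than $\bigO{1/n}$, this crude row-sum bound would fail and one would need a more delicate spectral argument; it is exactly the $\tfrac{1}{2(r-1)}$ smoothness condition imposed in \secref{subsec: bounded_error} (with $n=r-1$ here) that makes the elementary estimate go through.
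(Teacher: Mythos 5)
Your proof is correct, and it follows the same overall reduction as the paper's: both arguments pass to the vector $w=V^\top\bp$ of inner products (the paper calls this map $\boldsymbol{\mathrm{W}}$), use the identity $\|\bp\|_2^2=w^\top G^{-1}w$ with $G$ the Gram matrix, and then exploit that $G$ is a small perturbation of the identity. Where you genuinely diverge is in the crux step of controlling $G^{-1}$. The paper expands $G^{-1}=\sum_{k\ge 0}(I-G)^k$ as a Neumann series, bounds each entry of $(I-G)^k$ by $n^{k-1}\cos^k\theta_0$, sums the geometric series to get an entrywise bound $\frac{\cos\theta_0}{1-n\cos\theta_0}$ on $G^{-1}-I$, and then pairs this with the $\ell_\infty$ bound $|w_i|\le\epsilon$ by summing over all $n^2$ entries. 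You instead bound the operator norm: Gershgorin applied to the symmetric, zero-diagonal perturbation $E=G-I$ gives $\|E\|\le\frac{n-1}{2n}<\frac12$, hence $\lambda_{\min}(G)\ge\frac12$ and $\|G^{-1}\|\le 2$, which you pair with the $\ell_2$ bound $\|w\|_2\le\sqrt{n}\,\epsilon$. Both routes land on the same constant $2n$ (the paper's $\frac{n}{1-n\cos\theta_0}\le 2n$ versus your $\|G^{-1}\|\cdot n\le 2n$), but your spectral argument is shorter, avoids the entrywise series bookkeeping, and makes transparent why the threshold $\cos\theta_0\le\frac{1}{2n}$ is exactly what keeps $G$ uniformly well-conditioned in $n$ --- the same observation the paper needs but reaches less directly. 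One cosmetic note: the paper writes $\boldsymbol{\mathrm{W}}^\top\boldsymbol{\mathrm{W}}$ where the Gram matrix is really $\boldsymbol{\mathrm{W}}\boldsymbol{\mathrm{W}}^\top$ under its row convention; your $G=V^\top V$ with column convention is the cleaner bookkeeping.
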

\begin{proof}
We define $M = \reals^n$ as the space in which $\bp$ and $\bv_i$'s are embedded. Consider another copy of the space $N = \reals^n$ with standard orthogonal basis $\curlybracket{e_1,\cdots,e_n}$. We define the map $\boldsymbol{\mathrm{W}}: M \longrightarrow N$ as follows:
\begin{align*}
    \boldsymbol{\mathrm{W}} &: M \longrightarrow N\\
    q &\mapsto (\bv_1\cdot q,\bv_2\cdot q,\cdots,\bv_n\cdot q)
\end{align*}
Since $\curlybracket{\bv_i}_{i=1}^n$ forms a basis, thus $\boldsymbol{\mathrm{W}}$ is invertible. To ease the analysis, we could assume $\epsilon = 1$ (follows by scaling symmetry). Thus, it is clear that $w := \boldsymbol{\mathrm{W}}\bp$ has all its entries bounded in absolute value by 1.

We could write $\bp = \boldsymbol{\mathrm{W}}^{-1}w $, thus $\inmod{\bp}^2 = \paren{\boldsymbol{\mathrm{W}}^{-1}w}^{\top}\paren{\boldsymbol{\mathrm{W}}^{-1}w} = w^{\top}\paren{\boldsymbol{\mathrm{W}}^{\top}\boldsymbol{\mathrm{W}}}^{-1}w$. Thus, showing the bound for $w^{\top}\paren{\boldsymbol{\mathrm{W}}^{\top}\boldsymbol{\mathrm{W}}}^{-1}w$ where $w \in \bracket{-1,1}^n$ suffices. We note that $\paren{\boldsymbol{\mathrm{W}}^{\top}\boldsymbol{\mathrm{W}}}$ is a symmetric $(n\times n)$ matrix with diagonal entries 1 and non-diagonal entries bounded in absolute value by $\cos{\theta_0}$. 

Using convergence of the Neumann series $\sum_{k=0}^{\infty} \paren{\mathbb{Id}_{\paren{n\times n}} - \paren{\boldsymbol{\mathrm{W}}^{\top}\boldsymbol{\mathrm{W}}}}^k$ as $\paren{\mathbb{Id}_{\paren{n\times n}} - \paren{\boldsymbol{\mathrm{W}}^{\top}\boldsymbol{\mathrm{W}}}}$ is a bounded operator, we have:
\begin{align}
    \left|\paren{\boldsymbol{\mathrm{W}}^{\top}\boldsymbol{\mathrm{W}}}^{-1} - \mathbb{Id}_{\paren{n\times n}}\right|_{\ell_{\infty}} &\le \left|\sum_{k=1}^{\infty} \paren{\mathbb{Id}_{\paren{n\times n}} - \paren{\boldsymbol{\mathrm{W}}^{\top}\boldsymbol{\mathrm{W}}}}^k\right|_{\ell_{\infty}} \label{eqn: neu1}\\
    &\le \sum_{k=1}^{\infty}\left| \paren{\mathbb{Id}_{\paren{n\times n}} - \paren{\boldsymbol{\mathrm{W}}^{\top}\boldsymbol{\mathrm{W}}}}^k\right|_{\ell_{\infty}}\label{eqn: neu2}\\
    &\le \sum_{k=1}^{\infty} n^{k-1}\cos^{k}{\theta_0}\label{eqn: neu3}\\
    &= \frac{\cos{\theta_0}}{1-n\cos{\theta_0}}\label{eqn: neu4}
\end{align}
where $|B|_{\ell_{\infty}}$ refers to the maximum absolute value of any entry of $B$. \eqnref{eqn: neu1} follows using the Neumann series $\paren{\boldsymbol{\mathrm{W}}^{\top}\boldsymbol{\mathrm{W}}}^{-1}$ = $\sum_{k=0}^{\infty} \paren{\mathbb{Id}_{\paren{n\times n}} - \paren{\boldsymbol{\mathrm{W}}^{\top}\boldsymbol{\mathrm{W}}}}^k$. \eqnref{eqn: neu2} is a direct consequence of triangle inequality. Since entries of $\paren{\mathbb{Id}_{\paren{n\times n}} - \paren{\boldsymbol{\mathrm{W}}^{\top}\boldsymbol{\mathrm{W}}}}^k$ are bounded in absolute value by $\cos{\theta_0}$ thus \eqnref{eqn: neu3} follows. Using a straight-forward geometric sum we get an upper bound on the maximum absolute value of any entry in $\paren{\boldsymbol{\mathrm{W}}^{\top}\boldsymbol{\mathrm{W}}}^{-1} -\, \mathbb{Id}_{\paren{n\times n}}$ in \eqnref{eqn: neu4}.

Now, we note that
\begin{align}
    w^{\top}\paren{\boldsymbol{\mathrm{W}}^{\top}\boldsymbol{\mathrm{W}}}^{-1}w &= w^{\top}\paren{\paren{\boldsymbol{\mathrm{W}}^{\top}\boldsymbol{\mathrm{W}}}^{-1} -\, \mathbb{Id}_{\paren{n\times n}}}w + w^{\top}\paren{\mathbb{Id}_{\paren{n\times n}}}w\nonumber\\
    &\le \sum_{i,j}w_{ij}\paren{\paren{\boldsymbol{\mathrm{W}}^{\top}\boldsymbol{\mathrm{W}}}^{-1} -\, \mathbb{Id}_{\paren{n\times n}}}_{ij}w_{ij} + \inmod{w}^2\nonumber\\
    &\le \sum_{i,j}\left|w_{ij}\paren{\paren{\boldsymbol{\mathrm{W}}^{\top}\boldsymbol{\mathrm{W}}}^{-1} -\, \mathbb{Id}_{\paren{n\times n}}}_{ij}w_{ij}\right| + \inmod{w}^2\nonumber\\
    &\le \sum_{i,j} \left|\paren{\paren{\boldsymbol{\mathrm{W}}^{\top}\boldsymbol{\mathrm{W}}}^{-1} -\, \mathbb{Id}_{\paren{n\times n}}}_{ij}\right| + n\label{eqn: b1}\\
    &\le \frac{n^2\cos{\theta_0}}{1-n\cos{\theta_0}} + n\label{eqn: b2}\\
    & = \frac{n}{1-n\cos{\theta_0}}\label{eqn: b3}
\end{align}
In \eqnref{eqn: b1} we use $w \in \bracket{-1,1}^n$. \eqnref{eqn: b2} follows using \eqnref{eqn: neu4}. 

Since we have $\inmod{\bp}_2^2 = w^{\top}\paren{\boldsymbol{\mathrm{W}}^{\top}\boldsymbol{\mathrm{W}}}^{-1}w$ and $\cos{\theta_0} \le \frac{1}{2n}$, thus using \eqnref{eqn: b3}
$$\inmod{\bp}_2 \le \sqrt{\frac{n}{1-n\cos{\theta_0}}} \le \sqrt{2n}.$$
Scaling the map $\boldsymbol{\mathrm{W}}$ to $\epsilon$ yields the stated claim.

\end{proof}
Under the \assref{assumption: orthogonal} and \assref{assumption: bounded cone}, and bounded norm of $\thetab^*$ and $\hat{\thetab}$, we would establish that $\mathcal{TS}_{\thetab^*}$ is a $d^{\bigO{\log^2 \frac{1}{\epsilon}}}$ size $\epsilon$-approximate teaching set for $\thetab^*$. 
Before establishing the main result, we show the proof of \thmref{thm: boundedclassifier} below. 
Using \eqnref{eqn: largeeps}, we note that:
\begin{align*}
    \forall\,\bx \in \cX\quad &\inmod{\p^{\perp}\Phi(\bx)} \le \sqrt{\epsilon_s} \implies \inmod{\p\Phi(\bx)} \ge \sqrt{1-\epsilon_s}\\
    \forall\,(\bz,y) \in \mathcal{TS}_{\thetab^*}\quad &\inmod{\p^{\perp}\Phi(\bz)} \le \sqrt{\epsilon_s} \implies \inmod{\p\Phi(\bz)} \ge \sqrt{1-\epsilon_s}
\end{align*}
Now, we could further bound the norms of $\p^{\perp}\hat{\thetab}$ and $\p^{\perp}\thetab^*$ using triangle inequality and boundedness of $\curlybracket{\alpha_i}_{i=1}^l$ and $\bracket{\beta_0,\gamma}$ (as shown in \assref{assumption: bounded cone}):
\begin{align*}
    \inmod{\p^{\perp}\thetab^*} = \inmod{\sum_{i=1}^l \alpha_i\cdot \p^{\perp}\Phi(\ba_i)} &\le \sum_{i=1}^l \inmod{\alpha_i\cdot \p^{\perp}\Phi(\ba_i)} \le  \paren{\sum_{i=1}^l \left| \alpha_i\right|}\cdot \sqrt{\epsilon_s} = \boldsymbol{C}_{\epsilon}\cdot \sqrt{\epsilon_s}\\ 
    \inmod{\p^{\perp}\hat{\thetab}} = \inmod{\beta_0\cdot \p^{\perp}\Phi(\ba) + \sum_{j=1}^{r-1}\gamma_j\cdot \p^{\perp}\Phi(\bz_j)} &\le \inmod{\beta_0\cdot \p^{\perp}\Phi(\ba)} + \sum_{i=1}^{r-1} \inmod{\gamma_i\cdot \p^{\perp}\Phi(\bz_i)} \le  \paren{\left|\beta_0\right| + \sum_{i=1}^{r-1} \left| \gamma_i\right|}\cdot \sqrt{\epsilon_s} = \boldsymbol{D}_{\epsilon}\cdot \sqrt{\epsilon_s}
\end{align*}

\begin{proof}[Proof of \thmref{thm: boundedclassifier}]
In the following, we would bound $\left|f^*(\bx) - \hat{f}(\bx)\right|$ by $
\sqrt{\epsilon}$.
In order to bound the modulus, we would split the difference using $\p$ and $\p^{\bot}$ and then analyze the terms correspondingly. We can write any classifier $f$ as $f(\bx) = \thetab\cdot\Phi(\bx) = \p\thetab\cdot\p\Phi(\bx) + \p^{\bot}\thetab\cdot\p^{\bot}\Phi(\bx)$. Thus, we have: 
\begin{align}
    \left|f^*(\bx) - \hat{f}(\bx)\right| &= \left|\p\thetab^*\cdot\mathbb{P}\Phi(\bx) + \p^{\bot}\thetab^*\cdot\mathbb{P}^{\bot}\Phi(\bx) - \p\hat{\thetab}\cdot\mathbb{P}\Phi(\bx) - \p^{\bot}\hat{\thetab}\cdot\mathbb{P}^{\bot}\Phi(\bx)\right|\nonumber\\
    &\le \left|\p\thetab^*\cdot\mathbb{P}\Phi(\bx)- \p\hat{\thetab}\cdot\mathbb{P}\Phi(\bx)\right| + \left|\p^{\bot}\thetab^*\cdot\mathbb{P}^{\bot}\Phi(\bx)-\p^{\bot}\hat{\thetab}\cdot\mathbb{P}^{\bot}\Phi(\bx)\right|\label{eqn: split1}\\
    &\le \left|\p\thetab^*\cdot\mathbb{P}\Phi(\bx)-\p\hat{\thetab}\cdot\mathbb{P}\Phi(\bx)\right| + \inmod{\p^{\bot}\thetab^*-\p^{\bot}\hat{\thetab}}\cdot\inmod{\mathbb{P}^{\bot}\Phi(\bx)}\label{eqn: split2}\\
    &\le \underbrace{\left|\p\thetab^*\cdot\mathbb{P}\Phi(\bx)- \p\hat{\thetab}\cdot\mathbb{P}\Phi(\bx)\right|}_{\bigstar} +\, 
    \paren{\boldsymbol{C}_{\epsilon}+\boldsymbol{D}_{\epsilon}}\cdot \epsilon_s \label{eqn: split3}
\end{align}
\eqnref{eqn: split1} follows using triangle inequality. We can further bound $\left|\p^{\bot}\thetab^*\cdot\mathbb{P}^{\bot}\Phi(\bx)-\p^{\bot}\hat{\thetab}\cdot\mathbb{P}^{\bot}\Phi(\bx)\right|$ using Cauchy-Schwarz inequality and thus \eqnref{eqn: split2} follows. Using the observations: $||\mathbb{P}^{\bot}\hat{\thetab}|| \le \boldsymbol{D}_{\epsilon}\cdot\sqrt{\epsilon_s}$ and $\inmod{\p^{\bot}\thetab^*} \le \boldsymbol{C}_{\epsilon}\cdot\sqrt{\epsilon_s}$,
 we could upper bound $\inmod{\p^{\bot}\thetab^*-\p^{\bot}\hat{\thetab}}$ by $\paren{\boldsymbol{C}_{\epsilon}+\boldsymbol{D}_{\epsilon}}\cdot \sqrt{\epsilon_s}$. Since $\bx \in \cX$ thus $\inmod{\p^{\bot}\Phi(\bx)} \le \sqrt{\epsilon_s}$ (as shown in \eqnref{eqn: largeeps}), which gives \eqnref{eqn: split3}. 



Now, the key is to bound the $(\bigstar)$ appropriately and then the result would be proven. We would rewrite $\p\hat{\thetab}\cdot\p\Phi(\bx)$ in terms of the basis formed by $\{\p\Phi(\bz_i)\}_{i=1}^{r-1} \cup \{\p\theta^*\}$ (by \assref{assumption: orthogonal} $\{\p\Phi(\bz_i)\}_{i=1}^l$ are linearly independent and orthogonal to $\p\theta^*$). Using the basis, we can write $\mathbb{P}\hat{\thetab} = \sum_{i=1}^{r-1}c_i\cdot \mathbb{P}\Phi(\bz_i) + \lambda_r\cdot\mathbb{P}\thetab^*$ for some scalars $c_1,c_2,\cdots,\lambda_r$. Alternatively, we could rewrite $\mathbb{P}\hat{\thetab} = \beta_0\cdot \p\Phi(\ba) + \sum_{j=1}^{r-1}\gamma_j\cdot \p\Phi(\bz_j)$ where $\beta_0 > 0$ (as shown in \appref{appendixsub: solutionexists}). This could be used to note that $\lambda_r > 0$ because $\p\thetab^*\cdot\p\Phi(\ba) > 0$ (cf \secref{subsec.gaussiankernel}).

We study the decomposition of $\p\hat{\thetab}$ in terms of the basis in order to understand the component of $\p\hat{\thetab}$ along $\p\thetab^*$. We observe that: 
\begin{equation}
    \inmod{\p\hat{\thetab}}^2 = \inmod{\sum_{i=1}^{r-1}c_i\cdot \mathbb{P}\Phi(\bz_i)}^2 + \inmod{\lambda_r\cdot\mathbb{P}\thetab^*}^2 \label{eqn:perpnorm}
\end{equation}
Since $\hat{\thetab}$ is a solution to \eqnref{eqn: bounded}, $\hat{\thetab}\cdot\Phi(\bz_i) = 0$ for any $i \in \bracket{r-1}$. Now, we can write the equation in terms of projections as:
\begin{equation}
 \forall i\quad    \mathbb{P}\hat{\thetab}\cdot \mathbb{P}\Phi(\bz_i) + \mathbb{P}^{\bot}\hat{\thetab}\cdot \mathbb{P}^{\bot}\Phi(\bz_i) = 0\label{eqn: expandeqn}
\end{equation}
Using Cauchy-Schwarz inequality on the product $|\mathbb{P}^{\bot}\hat{\thetab}\cdot \mathbb{P}^{\bot}\Phi(\bz_i)|$ we obtain:
\begin{equation}
    |\mathbb{P}^{\bot}\hat{\thetab}\cdot \mathbb{P}^{\bot}\Phi(\bz_i)| \le ||\mathbb{P}^{\bot}\hat{\thetab}||\cdot ||\mathbb{P}^{\bot}\Phi(\bz_i)|| \le \boldsymbol{D}_{\epsilon}\cdot\epsilon_s \nonumber
\end{equation}
Plugging this into \eqnref{eqn: expandeqn}, we get the following bound on $|\mathbb{P}\hat{\thetab}\cdot \mathbb{P}\Phi(\bz_i)|$:
\begin{equation}
  |\mathbb{P}\hat{\thetab}\cdot \mathbb{P}\Phi(\bz_i)| \le \boldsymbol{D}_{\epsilon}\cdot\epsilon_s  \label{eqn: boundproj}
\end{equation}
We denote $V_{O} := \sum_{i=1}^{r-1}c_i\cdot \mathbb{P}\Phi(\bz_i)$. Notice that $V_{O}$ is the orthogonal projection of $\mathbb{P}\hat{\thetab}$ along the subspace $\textbf{span}\langle\p\Phi(\bz_1),\cdots,\p\Phi(\bz_{r-1})\rangle$. Thus, we could rewrite \eqnref{eqn: boundproj} further as:
\begin{equation}
    |\mathbb{P}\hat{\thetab}\cdot \mathbb{P}\Phi(\bz_i)| = |\paren{V_{O} + \lambda_r\cdot\mathbb{P}\thetab^*}\cdot \mathbb{P}\Phi(\bz_i)| = |V_{O}\cdot \mathbb{P}\Phi(\bz_i)| \le \boldsymbol{D}_{\epsilon}\cdot\epsilon_s \nonumber
\end{equation}
Notice that $||\mathbb{P}\Phi(\bz_i)|| \ge \sqrt{1-\epsilon_s}$. Hence, component of $V_{O}$ along $\mathbb{P}\Phi(\bz_i)$ is upper bounded by $\frac{\boldsymbol{D}_{\epsilon}\cdot\epsilon_s}{\sqrt{1-\epsilon_s}}$. 
Since $\curlybracket{\p\Phi(\bz_1),\cdots,\p\Phi(\bz_{r-1})}$ satisfy the conditions of \lemref{lemma: maximum norm} (the smoothness condition mentioned in \secref{subsec.gaussiankernel}) thus we could bound the norm of $V_{O}$ as follows: 
\begin{equation}
  ||V_{O}|| \le \sqrt{2(r-1)}\cdot \frac{\boldsymbol{D}_{\epsilon}\cdot\epsilon_s}{\sqrt{1-\epsilon_s}}  \label{eqn: finalbound}
\end{equation}


Using \eqnref{eqn:perpnorm} and \eqnref{eqn: finalbound} we can lower bound the norm of $\lambda_r\cdot\mathbb{P}\thetab^*$ as follows:
\begin{align}
    \inmod{\mathbb{P}\hat{\thetab}}^2 =& \inmod{\sum_{i=1}^{r-1}c_i\cdot \mathbb{P}\Phi(\bz_i)}^2 +\inmod{\lambda_r\cdot\mathbb{P}\thetab^*}^2
    = \inmod{V_{O}}^2 +\inmod{\lambda_r\cdot\mathbb{P}\thetab^*}^2\nonumber\\
    \implies& \inmod{\lambda_r\cdot\mathbb{P}\thetab^*}^2 \ge \paren{1-\boldsymbol{D}_{\epsilon}^2\cdot\epsilon_s} -  2\paren{r-1}\cdot \frac{\boldsymbol{D}_{\epsilon}^2\cdot\epsilon_s^2}{\paren{1-\epsilon_s}}  \ge 1-2\boldsymbol{D}_{\epsilon}^2\cdot\epsilon_s \label{eqn: actualnorm}
\end{align}
This follows because $\inmod{\mathbb{P}\hat{\thetab}}^2 \ge \paren{1-\boldsymbol{D}_{\epsilon}^2\cdot\epsilon_s} $ as $\inmod{\hat{\thetab}} = \bigO{1}$ and $\sqrt{2(r-1)}\cdot \epsilon_s \le \sqrt{2(r-1)}\cdot \frac{\epsilon}{(\sqrt{d})^s} \le \epsilon$.
With these observations we can rewrite $\paren{\bigstar}$ as follows:
\begin{align}
\centering
    \left|\p\thetab^*\cdot\mathbb{P}\Phi(\bx)- \p\hat{\thetab}\cdot\mathbb{P}\Phi(\bx)\right| &= \left|\p\thetab^*\cdot\mathbb{P}\Phi(\bx)-  \sum_{i=1}^{r-1}c_i\cdot \mathbb{P}\Phi(\bz_i)\cdot\mathbb{P}\Phi(\bx) - \lambda_r\cdot\mathbb{P}\thetab^*\cdot\mathbb{P}\Phi(\bx)\right|\nonumber\\
    &\le \left|\p\thetab^*\cdot\mathbb{P}\Phi(\bx)-  \lambda_r\cdot\mathbb{P}\thetab^*\cdot\mathbb{P}\Phi(\bx)\right| + \left| \sum_{i=1}^{r-1}c_i\cdot \mathbb{P}\Phi(\bz_i)\cdot\mathbb{P}\Phi(\bx)\right|\label{eqn: triangle}\\
    &\le \sqrt{|\boldsymbol{C}_{\epsilon}^2 - 2\boldsymbol{D}_{\epsilon}^2|}\cdot\sqrt{\epsilon_s} + \inmod{\sum_{i=1}^{r-1}c_i\cdot \mathbb{P}\Phi(\bz_i)}\cdot \inmod{\mathbb{P}\Phi(\bx)}\label{eqn: boundfinal1}\\
    &\le \sqrt{|\boldsymbol{C}_{\epsilon}^2 - 2\boldsymbol{D}_{\epsilon}^2|}\cdot\sqrt{\epsilon_s} +\sqrt{2(r-1)}\cdot \frac{\boldsymbol{D}_{\epsilon}\cdot\epsilon_s}{\sqrt{1-\epsilon_s}}\label{eqn: boundfinal2}\\
    & \le \frac{\sqrt{|\boldsymbol{C}_{\epsilon}^2 - 2\boldsymbol{D}_{\epsilon}^2|}\cdot\sqrt{\epsilon}}{(\sqrt{d})^{s/2}} +  2\boldsymbol{D}_{\epsilon}\cdot\epsilon\label{eqn: boundfinal3}\\
    & \le 2\max\left\{\frac{\sqrt{|\boldsymbol{C}_{\epsilon}^2 - 2\boldsymbol{D}_{\epsilon}^2|}}{(\sqrt{d})^{s/2}},\, 2\boldsymbol{D}_{\epsilon}\cdot\sqrt{\epsilon}\right\}\cdot \sqrt{\epsilon}\label{eqn: boundfinal4}
\end{align}
\eqnref{eqn: triangle} is a direct implication of triangle inequality. In \eqnref{eqn: boundfinal1}, in the first term we note that $\lambda_r > 0$ () $\inmod{\p\thetab^*} \ge \sqrt{1-\boldsymbol{C}_{\epsilon}^2\cdot\epsilon_s}$ and use \eqnref{eqn: actualnorm}, and in the second use Cauchy-Schwarz inequality.
\eqnref{eqn: boundfinal2} follows using \eqnref{eqn: finalbound} and that
$||\mathbb{P}\Phi(\bx)||$ is bounded by 1. We could unfold the value of $\epsilon_s \le \frac{\epsilon}{(\sqrt{d})^s}$. This gives us \eqnref{eqn: boundfinal3}. We could rewrite \eqnref{eqn: boundfinal3} to get a bound in terms of $\sqrt{\epsilon}$ to obtain \eqnref{eqn: boundfinal4}.

Now, using \eqnref{eqn: split3} and \eqnref{eqn: boundfinal4}, can bound $\left|f^*(\bx) - \hat{f}(\bx)\right|$ as follows:
\begin{align*}
    \left|f^*(\bx) - \hat{f}(\bx)\right| &\le 2\max\left\{\frac{\sqrt{|\boldsymbol{C}_{\epsilon}^2 - 2\boldsymbol{D}_{\epsilon}^2|}}{(\sqrt{d})^{s/2}},\, 2\boldsymbol{D}_{\epsilon}\cdot\sqrt{\epsilon}\right\}\cdot \sqrt{\epsilon} +  \paren{\boldsymbol{C}_{\epsilon}+\boldsymbol{D}_{\epsilon}}\cdot \frac{\epsilon}{(\sqrt{d})^s}\\
    &\le 3\max\left\{\frac{\sqrt{|\boldsymbol{C}_{\epsilon}^2 - 2\boldsymbol{D}_{\epsilon}^2|}}{(\sqrt{d})^{s/2}},\, 2\boldsymbol{D}_{\epsilon}\cdot\sqrt{\epsilon},\, \paren{\boldsymbol{C}_{\epsilon}+\boldsymbol{D}_{\epsilon}}\cdot \frac{\sqrt{\epsilon}}{(\sqrt{d})^s} \right \}\cdot \sqrt{\epsilon}\\
    &\le 3\boldsymbol{C}'\cdot \sqrt{\epsilon}
\end{align*}
where $\boldsymbol{C}' := \max\left\{\frac{\sqrt{|\boldsymbol{C}_{\epsilon}^2 - 2\boldsymbol{D}_{\epsilon}^2|}}{(\sqrt{d})^{s/2}},\, 2\boldsymbol{D}_{\epsilon}\cdot\sqrt{\epsilon},\, \paren{\boldsymbol{C}_{\epsilon}+\boldsymbol{D}_{\epsilon}}\cdot \frac{\sqrt{\epsilon}}{(\sqrt{d})^s} \right \}$. \\
Notice that all the terms in $\max\left\{\frac{\sqrt{|\boldsymbol{C}_{\epsilon}^2 - 2\boldsymbol{D}_{\epsilon}^2|}}{(\sqrt{d})^{s/2}},\, 2\boldsymbol{D}_{\epsilon}\cdot\sqrt{\epsilon},\, \paren{\boldsymbol{C}_{\epsilon}+\boldsymbol{D}_{\epsilon}}\cdot \frac{\sqrt{\epsilon}}{(\sqrt{d})^s} \right \}$ are smaller than 1 because of boundedness of $\boldsymbol{C}_{\epsilon}$ and $\boldsymbol{D}_{\epsilon}$.
Thus, we have shown a $3C'\cdot\sqrt{\epsilon}$ (where $C'$ is a constant smaller than 1) bound on the point-wise difference of $\hat{f}$ and $f^*$. Now, if we scale the $\epsilon$ and solve for $\epsilon^2/3$, we get the desired bound. Hence, the main claim of \thmref{thm: boundedclassifier} is proven i.e. $ \left|f^*(\bx) - \hat{f}(\bx)\right| \le \epsilon$.
\end{proof}
Now, we would complete the proof of the main result of \secref{subsec.gaussiankernel} which bounds the error incurred by the solution $\hat{\thetab} \in \cA_{opt}(\mathcal{TS}_{\thetab^*})$ i.e. \thmref{thm: gaussian_main_thm}. The point-wise closeness of $f^*$ and $\hat{f}$ established in \thmref{thm: boundedclassifier} would be key in bounding the error. We complete the proof as follows:

\begin{proof}[Proof of \thmref{thm: gaussian_main_thm}]
We show the error analysis when data-points are sampled from the data distribution $\cP$. 
\begin{align}
    \left|\textbf{err}(f^*) - \textbf{err}(\hat{f})\right| &=
    \left|\expover{(\bx,y) \sim \cP}{\max(-y\cdot f^*(\bx), 0)}  - \expover{(\bx,y) \sim \cP}{\max(-y\cdot \hat{f}(\bx), 0)}\right|\label{eqn:final1}\\
    &= \left|\expover{(\bx,y) \sim \cP}{\max(-y\cdot f^*(\bx), 0) - \max(-y\cdot \hat{f}(\bx), 0)}\right|\label{eqn:final2}\\
    &\le \expover{(\bx,y) \sim \cP}{\left|f^*(\bx) - \hat{f}(\bx)\right|}\label{eqn:final3}\\
 &\le \epsilon \label{eqn:final4}
\end{align}
\eqnref{eqn:final1} follows using the definition of $\textbf{err}(\cdot)$ function. Because of linearity of expectation, we get \eqnref{eqn:final2}. In \eqnref{eqn:final3}, we use the observation that modulus of an expectation is bounded by the expectation of the modulus of the random variable $f^*(\bx) - \hat{f}(\bx)$. In \thmref{thm: boundedclassifier}, we showed that for any $\bx \in \cX$, $\left|f^*(\bx) - \hat{f}(\bx)\right| \le \epsilon$. Thus, the main claim follows.
\end{proof}
\newpage

\section{Experimental Evaluation}\label{appendix: experimentals}
In this section, we provide an algorithmic procedure for constructing the $\epsilon$-approximate teaching set, and quantitatively evaluate our theoretical results as presented in \thmref{thm: boundedclassifier} and \thmref{thm: gaussian_main_thm} (cf. \secref{subsec: bounded_error}). 

Results in this section are supplementary to \figref{fig:example-RBF}. 
For a qualitative evaluation of the \emph{$\epsilon$-approximated teaching set}, please refer to 
\figref{fig:Learned_RBF_45c}, which illustrates the learner's Gaussian kernel perceptron learned from the {$\epsilon$-approximated teaching sets} on different classification tasks.

\subsection{Experimental Setup}

Our experiments are carried out on 4 different datasets: the two-moon dataset (2 interleaving half-circles with noise), the two-circles dataset (a large circle containing a small circle, with noise) from sklearn\footnote{https://scikit-learn.org/stable/modules/classes.html\#module-sklearn.datasets}, the Banana dataset\footnote{https://www.scilab.org/tutorials/machine-learning-–-classification-svm} where the two classes are not perfectly separable, and the Iris dataset\footnote{https://archive.ics.uci.edu/ml/datasets/iris} with one of the three classes removed.  
For each dataset, the following steps are performed:

\begin{enumerate}
\item\label{enum: e1} For a given set of data, we, assuming the role of the teacher, find the optimal Gaussian (with $\sigma = 0.9$) separator $\thetab^*$ and plot the corresponding boundaries. We estimate the perceptron loss $\textbf{err}(f^*)$ for this separator by summing up the total perceptron loss on the dataset and averaging over the size of the dataset.
\item\label{enum: e2} For some $s$, we use the degree $s$ polynomial approximation of the Gaussian separator to determine the approximate polynomial boundaries and select $r = \binom{2+s}{s} - 1$ points on the boundaries such that their images in the polynomial feature space are linearly independent. We make a copy of these points and assign positive labels to one copy and negative labels to the other. In addition, we pick 2 more points arbitrarily, one on each side of the boundaries (i.e. with opposite labels). Thus $\mathcal{TS_{\theta^*}}$ of size $2r$ is constructed.
\item\label{enum: e3} Following \assref{assumption: orthogonal}-\ref{assumption: bounded cone}, the Gaussian kernel perceptron learner (with the same $\sigma$ parameter value 0.9) uses only $\mathcal{TS_{\theta^*}}$ to learn a separator $\hat{\thetab}$. The perceptron loss $\textbf{err}(\hat{f})$ w.r.t. the original dataset is calculated by averaging the total perceptron loss over the number of points in the dataset.
\item\label{enum: e4} Repeat Step 2 and Step 3 for $s$ = $2, 3,\cdots, 12$ and record the perceptron loss (i.e the $\max$ error function as shown in \secref{sec.statement}) for the corresponding teaching set sizes $2r$ (where $r$ = $\binom{2+s} {s} - 1$). Then we plot the error $\left|\textbf{err}(f^*) - \textbf{err}(\hat{f})\right|$  as a function of the teaching set size.
\end{enumerate}

The corresponding plots for Steps 1-4 are shown in columns (a)-(d) of \figref{fig:panel-RBF}, where for Step 2 (column (b)), the plots all correspond to when $s=5$.

\subsection{Implementation Details}
In this subsection we provide more algorithmic and numeric details about the implementation of the experiments.

First we describe how the first $r-1$ points are generated in \stepref{enum: e2}. Given that the approximate polynomial separator has been found using the kernel and feature map approximation described in \eqnref{eqn: eqn17} and \eqnref{eqn:eqn18}, we are able to plot the corresponding boundaries, and by the same reasoning as in the case of teaching set generation for the polynomial learner, we need to locate points on the boundaries such that their images in the r-dimensional feature space are linearly independent. We achieve this by sampling points on the zero-contour line and row-reducing the matrix formed by the image of all such points. This way, $r-1$ qualified points can be efficiently located. In addition, as discussed in \secref{subsec: bounded_error}, the teaching points are selected within the radius of some small constant multiple of $\sqrt{R}$ consistently across the experiments. In this case, we have arbitrarily picked the constant to be 4. 

In \stepref{enum: e3}, when the learner learns the separator, we need to ensure \assref{assumption: orthogonal}-\ref{assumption: bounded cone} are satisfied. This is made possible by adding the corresponding constraints to the learner's optimization procedure. Specifically, we need to enforce that 1) the norm of $\hat{\thetab}$ is not far from 1, and 2) $\beta$\footnote{We pick two points outside the orthogonal complement of $\p\thetab^*$, one with positive label and another with negative label. Thus, in place of $\beta_0$ (as used in \secref{subsec.gaussiankernel}) we use $\beta \in \reals^2$ here.} and $\gamma$ are bounded absolutely as mentioned in \assref{assumption: bounded cone}. This is achieved by adjusting the specified bound higher or lower as the current-iteration $\hat{\thetab}$ norm varies during the optimization procedure. Eventually, we normalize $\hat{\thetab}$ and check that the final $\beta$ and $\gamma$ are indeed bounded (i.e. \assref{assumption: bounded cone} is satisfied). 

Finally, the perceptron loss calculated for each value of $s$ is based on 5 separate runs of \stepref{enum: e2}, while for each run, the learner's kernelized Gaussian perceptron learning algorithm is repeated 5 times. The learner's perceptron loss is then averaged over the 25 epochs of the algorithm to prevent numerical inaccuracies that may arise during the learner's constrained optimization process and possibly the teaching set generation process.

\subsection{Results}
We present the experiment results in \figref{fig:panel-RBF}. In the right-most plot of each row, the estimates of $\left|\textbf{err}(f^*) - \textbf{err}(\hat{f})\right|$ are plotted against the teaching set sizes $2r$ corresponding to $s=2,\cdots,12$ (as discussed in \secref{subsec: gaussian_kernel_approx}). As can be observed from the shape of the curves in plots of column (d), indeed, our experimental results confirm that the number of teaching examples needed for $\epsilon$-approximate teaching is upper-bounded by $d^{\bigO{\log^2 \frac{1}{\epsilon}}}$ for Gaussian kernel perceptrons.

\begin{figure*}[t]
\centering
	\begin{subfigure}[b]{0.24\textwidth}
	   \centering
		\includegraphics[width=\linewidth]{fig/supp_fig1-1b.png}
		\label{fig:Teacher_RBf_41a}
	\end{subfigure}
	\begin{subfigure}[b]{0.24\textwidth}
	    \centering
		\includegraphics[width=\linewidth]{fig/supp_fig1-2b.png}
		\label{fig:Polynomial_approx_41b}
	\end{subfigure}	
	\begin{subfigure}[b]{0.24\textwidth}
	    \centering
		\includegraphics[width=\linewidth]{fig/supp_fig1-3b.png}
		\label{fig:Learned_RBF_41c}
	\end{subfigure}	
	\begin{subfigure}[b]{0.24\textwidth}
	    \centering
		\includegraphics[width=\linewidth]{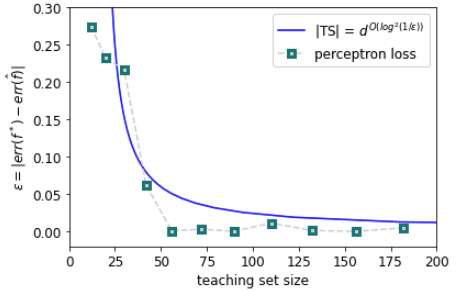}
		\label{fig:Epsilon_size_41d}
	\end{subfigure}	
	\\
	\begin{subfigure}[b]{0.24\textwidth}
	   \centering
		\includegraphics[width=\linewidth]{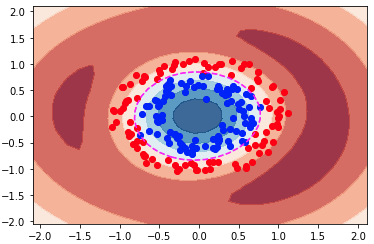}
		\label{fig:Teacher_RBf_42a}
	\end{subfigure}
	\begin{subfigure}[b]{0.24\textwidth}
	    \centering
		\includegraphics[width=\linewidth]{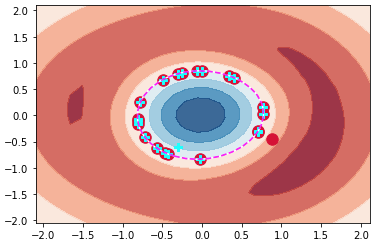}
		\label{fig:Polynomial_approx_42b}
	\end{subfigure}	
	\begin{subfigure}[b]{0.24\textwidth}
	    \centering
		\includegraphics[width=\linewidth]{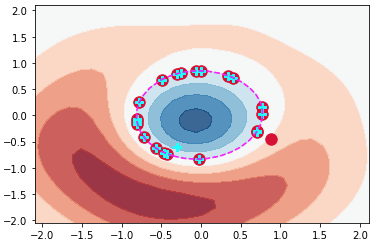}
		\label{fig:Learned_RBF_42c}
	\end{subfigure}	
	\begin{subfigure}[b]{0.24\textwidth}
	    \centering
		\includegraphics[width=\linewidth]{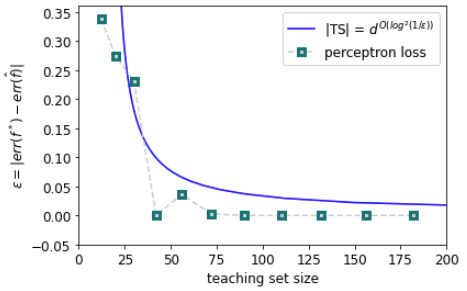}
		\label{fig:Epsilon_size_42d}
	\end{subfigure}	
	\\
	\begin{subfigure}[b]{0.24\textwidth}
	   \centering
		\includegraphics[width=\linewidth]{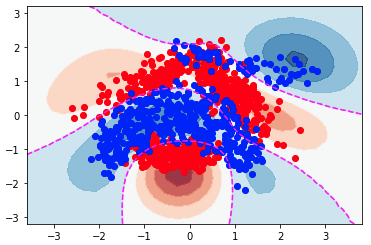}
		\label{fig:Teacher_RBf_43a}
	\end{subfigure}
	\begin{subfigure}[b]{0.24\textwidth}
	    \centering
		\includegraphics[width=\linewidth]{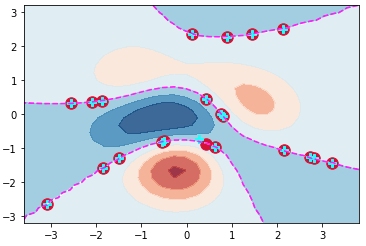}
		\label{fig:Polynomial_approx_43b}
	\end{subfigure}	
	\begin{subfigure}[b]{0.24\textwidth}
	    \centering
		\includegraphics[width=\linewidth]{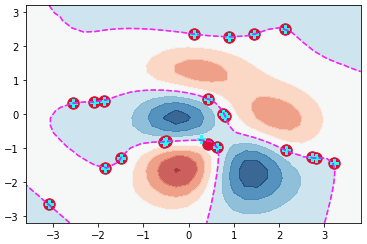}
		\label{fig:Learned_RBF_43c}
	\end{subfigure}	
	\begin{subfigure}[b]{0.24\textwidth}
	    \centering
		\includegraphics[width=\linewidth]{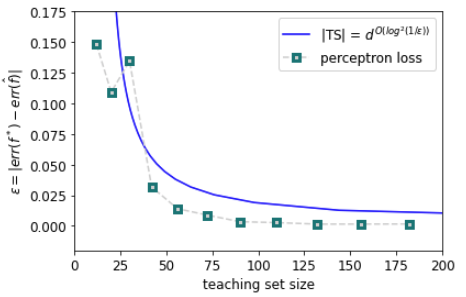}
		\label{fig:Epsilon_size_43d}
	\end{subfigure}	
	\\
	\begin{subfigure}[b]{0.24\textwidth}
	   \centering
		\includegraphics[width=\linewidth]{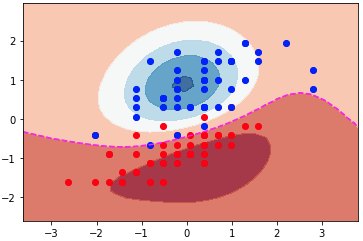}
		\caption{}
		\label{fig:Teacher_RBf_45a}
	\end{subfigure}
	\begin{subfigure}[b]{0.24\textwidth}
	    \centering
		\includegraphics[width=\linewidth]{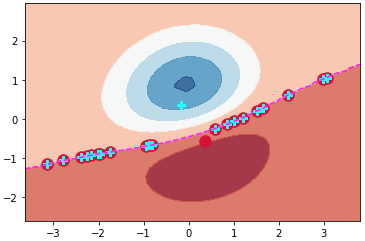}
		\caption{}
		\label{fig:Polynomial_approx_45b}
	\end{subfigure}	
	\begin{subfigure}[b]{0.24\textwidth}
	    \centering
		\includegraphics[width=\linewidth]{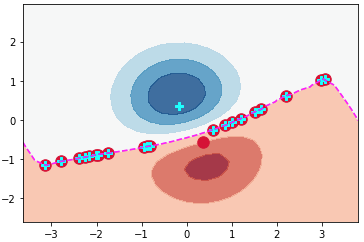}
		\caption{}
		\label{fig:Learned_RBF_45c}
	\end{subfigure}	
	\begin{subfigure}[b]{0.24\textwidth}
	    \centering
		\includegraphics[width=\linewidth]{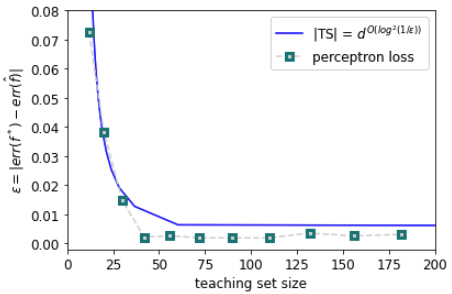}
		\caption{}
		\label{fig:Epsilon_size_45d}
	\end{subfigure}	
  \caption{Constructing the $\epsilon$-approximate teaching set $\mathcal{TS}_{\thetab^*}$ for a Gaussian kernel perceptron learner. Each row corresponds to the numerical results on a different dataset as described in the beginning of \appref{appendix: experimentals}.
  For each row from left to right: (a) optimal Gaussian boundary and the data set; (b) Teacher identifies a (degree-5) polynomial approximation of the Gaussian decision boundary and finds the $\epsilon$-approximate teaching set $\mathcal{TS}_{\thetab^*}$ (marked by cyan plus markers and red dots); (c) Learner learns a Gaussian kernel perceptron from the optimal teaching set in the previous plot; (d) $\abs{\mathcal{TS}}\text{-}\epsilon$ plot for degree-2 to degree-12 polynomial approximation teaching results. The blue curve corresponds to $d^{\bigO{\log^2 \frac{1}{\epsilon}}}$ = $2^{\bigO{\log^2 \frac{1}{\epsilon}}}$ where $d=2$.}
	\label{fig:panel-RBF}
\end{figure*}

}
}
{}
\end{document}


%

%

\onecolumn
\aistatstitle{Instructions for Paper Submissions to AISTATS 2021: \\
Supplementary Materials}

\section{FORMATTING INSTRUCTIONS}

To prepare a supplementary pdf file, we ask the authors to use \texttt{aistats2021.sty} as a style file and to follow the same formatting instructions as in the main paper.
The only difference is that the supplementary material must be in a \emph{single-column} format.
You can use \texttt{supplement.tex} in our starter pack as a starting point, or append the supplementary content to the main paper and split the final PDF into two separate files.

Note that reviewers are under no obligation to examine your supplementary material.

\section{MISSING PROOFS}

The supplementary materials may contain detailed proofs of the results that are missing in the main paper.

\subsection{Proof of Lemma 3}

\textit{In this section, we present the detailed proof of Lemma 3 and then [ ... ]}

\section{ADDITIONAL EXPERIMENTS}

If you have additional experimental results, you may include them in the supplementary materials.

\subsection{The Effect of Regularization Parameter}

\textit{Our algorithm depends on the regularization parameter $\lambda$. Figure 1 below illustrates the effect of this parameter on the performance of our algorithm. As we can see, [ ... ]}

\vfill